\newcommand*{\ARXIV}{}
\newcommand*{\CAMREADY}{}
	\renewcommand{\cite}[1]{\citep{#1}}
	\definecolor{mydarkblue}{rgb}{0,0.08,0.5}
	\def\footnoterule{\kern-3pt \hrule width 12pc \kern 2.6pt }
	\renewenvironment{abstract}%
	{%
		\vskip 0in%
		\centerline%
		{\large\bf Abstract}%
		\vspace{-1ex}%
		\begin{quote}%
		}
		{
			\par%
		\end{quote}%
		\vskip 0ex%
		\vspace{-2mm}
	}
	\title{\vskip -4ex \bf{Lecture Notes on Linear Neural Networks:} \\[0.5mm] {\normalfont A Tale of Optimization and Generalization in Deep Learning} \vskip -1.5ex}
	\author{
		\begin{tabular}{ccc}
 		\textit{Nadav Cohen} & \hspace{3mm} & \textit{Noam Razin}
 		\end{tabular}	
	}
	\date{
		\vspace{-1mm}
		\fontsize{11}{12.2}\selectfont
		Tel Aviv University \\[5mm]
	}
	\newtheorem{claim}[theorem]{Claim}
	\newtheorem{fact}[theorem]{Fact}
	\newtheorem{procedure}{Procedure}
	\newtheorem{conjecture}{Conjecture}	
	\newtheorem{hypothesis}{Hypothesis}	
	\newtheorem{lemma}{Lemma}
	\newtheorem{corollary}{Corollary}
	\newtheorem{theorem}{Theorem}
	\newtheorem{proposition}{Proposition}
	\newtheorem{assumption}{Assumption}
	\newtheorem{example}{Example}
	\newtheorem{remark}{Remark}
	\newtheorem{claim}{Claim}
	\theoremstyle{definition}
	\newtheorem{definition}{Definition}
	\newtheorem{exercise}{Exercise}
	\newtheorem{supposition}{Supposition}
\definecolor{green}{rgb}{0.0, 0.5, 0.0}
\definecolor{xcolor-gray}{gray}{0.95}
\newcommand{\e}{{\mathbf e}}
\newcommand{\x}{{\mathbf x}}
\newcommand{\y}{{\mathbf y}}
\newcommand{\R}{{\mathbb R}}
\newcommand{\N}{{\mathbb N}}
\newcommand{\D}{{\mathcal D}}
\renewcommand{\P}{{\mathcal P}}
\renewcommand{\S}{{\mathcal S}}
\newcommand{\T}{{\mathcal T}}
\newcommand{\uu}{{\mathbf u}}
\newcommand{\vv}{{\mathbf v}}
\newcommand{\sigmabf}{{\boldsymbol \sigma}}
\def\be   {\begin{equation}}
	\def\ee   {\end{equation}}
\def\bea {\begin{eqnarray}}
	\def\eea {\end{eqnarray}}
\def\beas {\begin{eqnarray*}}
	\def\eeas {\end{eqnarray*}}
\renewcommand{\mapsfrom}{\mathrel{\reflectbox{\ensuremath{\mapsto}}}}
\newcommand\inprodlr[2]{{\left\langle{#1},{#2}\right\rangle}}
\DeclareMathOperator*{\tr}{Tr}
\DeclareMathOperator*{\rank}{Rank}
\DeclareMathOperator*{\argmin}{argmin}  
\newcommand{\inprod}[2]  {\left\langle{#1},{#2}\right\rangle}
\renewcommand{\det}{\mathrm{det}}
\DeclareFontFamily{U}{mathx}{\hyphenchar\font45}
\DeclareFontShape{U}{mathx}{m}{n}{<-> mathx10}{}
\DeclareSymbolFont{mathx}{U}{mathx}{m}{n}
\DeclareMathAccent{\widebar}{0}{mathx}{"73}
\definecolor{darkspringgreen}{rgb}{0.09, 0.45, 0.27}
	\newcommand*{\ABBR}{}
	\newcommand*{\ABBR}{}
	\newcommand*{\ABBR}{}
	\newcommand*{\ABBR}{}
	\newcommand*{\ABBR}{}
	\newcommand{\eg}{{\it e.g.}}
	\newcommand{\ie}{{\it i.e.}}
	\newcommand{\cf}{{\it cf.}}
\begin{document}
	
	
	\ifdefined\ARXIV
		\maketitle
	\fi
	\ifdefined\NEURIPS
	\title{Paper Title}
		\author{
			Author 1 \\
			Author 1 Institution \\	
			\texttt{author1@email} \\
			\And
			Author 1 \\
			Author 1 Institution \\	
			\texttt{author1@email} \\
		}
		\maketitle
	\fi
	\ifdefined\CVPR
		\title{Paper Title}
		\author{
			Author 1 \\
			Author 1 Institution \\	
			\texttt{author1@email} \\
			\and
			Author 2 \\
			Author 2 Institution \\
			\texttt{author2@email} \\	
			\and
			Author 3 \\
			Author 3 Institution \\
			\texttt{author3@email} \\
		}
		\maketitle
	\fi
	\ifdefined\AISTATS
		\twocolumn[
		\aistatstitle{Paper Title}
		\ifdefined\CAMREADY
			\aistatsauthor{Author 1 \And Author 2 \And Author 3}
			\aistatsaddress{Author 1 Institution \And Author 2 Institution \And Author 3 Institution}
		\else
			\aistatsauthor{Anonymous Author 1 \And Anonymous Author 2 \And Anonymous Author 3}
			\aistatsaddress{Unknown Institution 1 \And Unknown Institution 2 \And Unknown Institution 3}
		\fi
		]	
	\fi
	\ifdefined\ICML
		\icmltitlerunning{Paper Title}
		\twocolumn[
		\icmltitle{Paper Title} 
		\icmlsetsymbol{equal}{*}
		\begin{icmlauthorlist}
			\icmlauthor{Author 1}{inst} 
			\icmlauthor{Author 2}{inst}
		\end{icmlauthorlist}
		\icmlaffiliation{inst}{Some Institute}
		\icmlcorrespondingauthor{Author 1}{author1@email}
		\icmlkeywords{}
		\vskip 0.3in
		]
		\printAffiliationsAndNotice{} 
	\fi
	\ifdefined\ICLR
		\title{Paper Title}
		\author{
			Author 1 \\
			Author 1 Institution \\
			\texttt{author1@email}
			\And
			Author 2 \\
			Author 2 Institution \\
			\texttt{author2@email}
			\And
			Author 3 \\ 
			Author 3 Institution \\
			\texttt{author3@email}
		}
		\maketitle
	\fi
	\ifdefined\COLT
		\title{Paper Title}
		\coltauthor{
			\Name{Author 1} \Email{author1@email} \\
			\addr Author 1 Institution
			\And
			\Name{Author 2} \Email{author2@email} \\
			\addr Author 2 Institution
			\And
			\Name{Author 3} \Email{author3@email} \\
			\addr Author 3 Institution}
		\maketitle
	\fi


\begin{abstract}
These notes are based on a lecture delivered by NC on March 2021, as part of an advanced course in Princeton University on the mathematical understanding of deep learning.
They present a theory (developed by NC, NR and collaborators) of linear neural networks---a fundamental model in the study of optimization and generalization in deep learning.
Practical applications born from the presented theory are also discussed.
The theory is based on mathematical tools that are dynamical in nature.
It showcases the potential of such tools to push the envelope of our understanding of optimization and generalization in deep learning.
The text assumes familiarity with the basics of statistical learning theory.\footnote{%
The reader is referred to~\citet{shalev2014understanding} for an introduction to the area.
}
Exercises (without solutions) are included.
\end{abstract}

	\ifdefined\COLT
		\medskip
		\begin{keywords}
			\emph{TBD}, \emph{TBD}, \emph{TBD}
		\end{keywords}
	\fi

	

\section{Introduction} \label{sec:intro}

\emph{Deep learning} (machine learning with neural network models subject to gradient-based training) is delivering groundbreaking performance, which facilitates the rise of artificial intelligence (see~\citet{lecun2015deep}).
However, despite its extreme popularity, our formal understanding of deep learning is limited.
Its application in practice is based primarily on conventional wisdom, trial-and-error and intuition, often leading to suboptimal results (compromising not only effectiveness, but also safety, robustness, privacy, fairness and more). 
Consequently, immense interest in developing mathematical theories behind deep learning has arisen, in the hopes that such theories will shed light on existing empirical findings, and more importantly, lead to principled methods that bring forth improved performance and new capabilities.

From the perspective of statistical learning theory, understanding deep learning requires addressing three fundamental questions: \emph{expressiveness}, \emph{optimization} and \emph{generalization}. 
Expressiveness refers to the ability of compactly sized neural networks to represent functions capable of solving real-world problems. 
Optimization concerns the effectiveness of simple gradient-based algorithms in solving neural network training programs that are non-convex and thus seemingly difficult. 
Generalization treats the phenomenon of neural networks not overfitting even when having much more trainable parameters (weights) than examples to train on.

In these lecture notes we theoretically analyze \emph{linear neural networks}---a fundamental model in the study of optimization and generalization in deep learning.
A linear neural network is a feed-forward fully-connected neural network with linear (no) activation.
Namely, for depth $n \in \N_{\geq 2}$, input dimension $d_0 \in \N$, output dimension $d_n \in \N$, and hidden dimensions $d_1 , d_2 , \ldots , d_{n - 1} \in \N$, it refers to the parametric family of hypotheses $\{ \x \mapsto W_n W_{n - 1} \cdots W_1 \x : W_j \in \R^{d_j \times d_{j-1}} , j = 1 , 2 , \ldots , n \}$, where $W_j$ is regarded as the weight matrix of layer~$j$.
Linear neural networks are trivial from the perspective of expressiveness (they realize only linear input-to-output mappings), but not so in terms of optimization and generalization---they lead to highly non-convex training objectives with multiple minima and saddle points, and when applied to underdetermined learning problems (\eg, matrix sensing; see Subsection~\ref{sec:gen:mat_sense} below) it is unclear a priori what kind of solutions gradient-based algorithms will converge to.
By virtue of these properties, linear neural networks often serve as a theoretical surrogate for practical deep learning, allowing for controlled analyses of optimization and generalization (see, \eg,~\citet{baldi1989neural,fukumizu1998effect,saxe2014exact,kawaguchi2016deep,hardt2016identity,ge2016matrix,gunasekar2017implicit,li2018algorithmic,du2018algorithmic,nar2018step,bartlett2018gradient,laurent2018deep,arora2018optimization,arora2019convergence,arora2019implicit,du2019width,lampinen2019analytic,ji2019gradient,gidel2019implicit,wu2019global,eftekhari2020training,mulayoff2020unique,razin2020implicit,advani2020high,chou2020gradient,li2021towards,yun2021unifying,min2021explicit,tarmoun2021understanding,azulay2021implicit,nguegnang2021convergence,bah2022learning}).

The theory presented in these notes is \emph{dynamical} in nature, relying on careful characterizations of the trajectories assumed in training.
It will demonstrate the potential of dynamical techniques to succeed where other theoretical approaches fail.

Throughout the text, we consider the case where a linear neural network can express any linear (input-to-output) mapping, meaning that its hidden dimensions are large enough to not constrain rank, \ie, $d_j \geq \min \{ d_0 , d_n \}$ for $j = 1 , 2 , \ldots , n - 1$.
Many of the presented results will not rely on this assumption, but for simplicity it is maintained throughout.

\section{Dynamical Analysis} \label{sec:dyn}

Suppose we are given an analytic\footnote{%
A function $f : \D \to \R^d$, where $\D$ is some domain in Euclidean space and $d \in \N$, is said to be \emph{analytic}, if at every $x \in \D$:
$f ( \cdot )$~is infinitely differentiable;
and
the Taylor series of~$f ( \cdot )$ converges to it on some neighborhood of~$x$.
\label{foot:analytic}
}
training loss $\ell : \R^{d_n \times d_0} \to \R$ defined over linear mappings from $\R^{d_0}$ to~$\R^{d_n}$.
For example, $\ell ( \cdot )$~could correspond to a linear regression task with $d_0$ input variables and $d_n$ output variables, or a multinomial logistic regression task with instances in~$\R^{d_0}$ and $d_n$ possible labels.
In line with these examples we typically think of~$\ell ( \cdot )$ as being convex, though unless stated otherwise this will not be assumed.
Applying a linear neural network to~$\ell ( \cdot )$ boils down to optimizing the \emph{overparameterized objective}:
\bea
&\phi : \R^{d_1 \times d_0} \times \R^{d_2 \times d_1} \times \cdots \times \R^{d_n \times d_{n - 1}} \to \R
\text{\,,}&
\label{eq:oprm_obj} \\
&\phi ( W_1 , W_2 , \ldots , W_n ) = \ell ( W_n W_{n - 1} \cdots W_1 )
\text{\,.}&
\nonumber
\eea
While the loss~$\ell ( \cdot )$ may be convex, the following proposition shows that, aside from degenerate cases (namely, aside from cases where~$\ell ( \cdot )$ is globally minimized by the zero mapping), the overparameterized objective~$\phi ( \cdot )$ is non-convex.
\begin{proposition}
\label{prop:oprm_obj_non_convex}
If~$\ell ( \cdot )$ does not attain its global minimum at the origin then~$\phi ( \cdot )$ is non-convex.
\end{proposition}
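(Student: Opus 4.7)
The plan is to exploit the sign-flip symmetry of the overparameterization: flipping the signs of any two weight matrices leaves the product $W_n \cdots W_1$ invariant, so the overparameterized objective takes the same value at configurations that lie on opposite ``sides'' of the origin. The convex combination of such two configurations will pass through a point whose product vanishes, forcing $\phi$ to equal $\ell(0)$ there. If $\ell(\cdot)$ is not globally minimized at the origin, this midpoint value will strictly exceed the endpoint values, violating the midpoint-convexity inequality.

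More concretely, I would first invoke the hypothesis to fix $W^\ast \in \R^{d_n \times d_0}$ with $\ell(W^\ast) < \ell(0)$, and factor it as $W^\ast = W_n^\ast W_{n-1}^\ast \cdots W_1^\ast$ with $W_j^\ast \in \R^{d_j \times d_{j-1}}$. Such a factorization exists: writing $r = \min\{d_0,d_n\}$ and decomposing $W^\ast = U V$ with $U \in \R^{d_n \times r}$, $V \in \R^{r \times d_0}$, one can pad $V$ into $W_1^\ast$, pad $U$ into $W_n^\ast$, and take each intermediate $W_j^\ast$ to be a padded identity; the hypothesis $d_j \ge \min\{d_0,d_n\}$ is exactly what makes this possible. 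I would then define the two parameter configurations
\[
P_1 = (W_1^\ast, W_2^\ast, \ldots, W_{n-1}^\ast, W_n^\ast), \qquad P_2 = (-W_1^\ast, W_2^\ast, \ldots, W_{n-1}^\ast, -W_n^\ast),
\]
and observe that $(-W_n^\ast) W_{n-1}^\ast \cdots W_2^\ast (-W_1^\ast) = W^\ast$, so $\phi(P_1) = \phi(P_2) = \ell(W^\ast)$.

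Next, I would compute the midpoint $M = \tfrac{1}{2}(P_1 + P_2) = (0, W_2^\ast, \ldots, W_{n-1}^\ast, 0)$ (with the obvious simplification $M = (0,0)$ when $n=2$). Since either the first or last coordinate of $M$ is the zero matrix, the corresponding product is the zero linear map, giving $\phi(M) = \ell(0) > \ell(W^\ast)$. Midpoint convexity would demand
\[
\phi(M) \le \tfrac{1}{2}\phi(P_1) + \tfrac{1}{2}\phi(P_2) = \ell(W^\ast),
\]
which contradicts the strict inequality above, establishing non-convexity of $\phi$.

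I do not anticipate a real obstacle here; the argument is essentially a symmetry-plus-strict-inequality trick. The only mildly delicate point is ensuring that an arbitrary $W^\ast \in \R^{d_n \times d_0}$ admits a factorization of the required shape, which is where the standing assumption on the hidden widths enters. Everything else reduces to a direct verification of the two function values and the definition of convexity.
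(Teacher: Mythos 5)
Your proof is correct, but it takes a different route from the paper's. The paper argues via first-order optimality: since $\ell(\cdot)$ is not globally minimized at the origin, neither is $\phi(\cdot)$ (this uses the same factorization fact you invoke), yet the gradient of $\phi(\cdot)$ at $(0,\ldots,0)$ vanishes because every partial derivative $W_{n:j+1}^\top \nabla\ell(W_{n:1}) W_{j-1:1}^\top$ contains a zero factor when $n \geq 2$; a differentiable convex function cannot have a stationary point that is not a global minimizer, so $\phi(\cdot)$ cannot be convex. You instead exhibit an explicit violation of midpoint convexity using the sign-flip symmetry $(W_1,\ldots,W_n) \mapsto (-W_1, W_2,\ldots,W_{n-1},-W_n)$, which preserves the end-to-end product but whose midpoint with the identity configuration collapses the product to zero. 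Your construction is sound: the factorization $W^\ast = W_n^\ast\cdots W_1^\ast$ exists precisely because $d_j \geq \min\{d_0,d_n\} \geq \rank(W^\ast)$, the two negations cancel in the product so $\phi(P_1)=\phi(P_2)=\ell(W^\ast)$, and the midpoint value is $\ell(0) > \ell(W^\ast)$. What your approach buys is that it avoids any appeal to differentiability of $\phi(\cdot)$ or to the gradient formula, and it is constructive (it names three points witnessing the failure of convexity); what the paper's approach buys is brevity and the reuse of the gradient computation that appears anyway in the dynamical analysis. Both arguments rest on the same underlying expressiveness fact about factorizations.
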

\begin{proof}
If~$\ell ( \cdot )$ does not attain its global minimum at the origin then the same applies to~$\phi ( \cdot )$.
This means that if $\phi ( \cdot )$ is convex, its gradient at the origin must be non-zero.
However, the latter gradient is clearly zero.
\end{proof}

We are interested in the dynamics of gradient-based optimization when applied to the overparameterized objective~$\phi ( \cdot )$.
Our analysis will focus on \emph{gradient flow}---a continuous version of gradient descent (attained by taking the step size to be infinitesimal).
Under gradient flow, each weight matrix~$W_j$ traverses through a continuous curve, which (with slight overloading of notation) we denote by~$W_j ( \cdot )$.
Accordingly,~$W_j ( 0 )$~represents the value of~$W_j$ at initialization, and~$W_j ( t )$ its value at time~$t \in \R_{> 0}$ of optimization.
With a chosen initialization $W_1 ( 0 ) , W_2 ( 0 ) , \ldots , W_n ( 0 )$, the curves $W_1 ( \cdot ) , W_2 ( \cdot ) , \ldots , W_n ( \cdot )$ are defined through the following set of differential equations:
\be
\dot{W}_j ( t ) := \tfrac{d}{dt} W_j ( t ) = - \tfrac{\partial}{\partial W_j} \, \phi ( W_1 ( t ) , W_2 ( t ) , \ldots , W_n ( t ) )
\text{\,,} ~~ t > 0 ~ , ~ j = 1 , 2 , \ldots , n
\text{\,.}
\label{eq:gf}
\ee
We note that it is possible to translate to gradient descent (with positive step size) many of the gradient flow results which will be derived, either through analogous discrete analyses (see, \eg,~\citet{arora2019convergence}), or by bounding the distance between gradient descent and gradient flow (see~\citet{elkabetz2021continuous}).

\medskip

Moving on to our dynamical analysis, the first observation we make is that optimization conserves certain quantities.
\begin{lemma}
\label{lemma:conserve}
For each $j = 1 , 2 , \ldots , n - 1$, the curve $W_{j + 1}^\top ( \cdot ) W_{j + 1} ( \cdot ) - W_j ( \cdot ) W_j^\top ( \cdot )$ is constant through time, \ie, $W_{j + 1}^\top ( t ) W_{j + 1} ( t ) - W_j ( t ) W_j^\top ( t ) = W_{j + 1}^\top ( 0 ) W_{j + 1} ( 0 ) - W_j ( 0 ) W_j^\top ( 0 )$ for all $t > 0$.
\end{lemma}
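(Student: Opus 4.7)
The plan is a direct computation: differentiate $W_{j+1}^\top(t) W_{j+1}(t) - W_j(t) W_j^\top(t)$ with respect to $t$, substitute the gradient flow equations~\eqref{eq:gf}, and show that the result is zero. Since the initial value $W_{j+1}^\top(0)W_{j+1}(0) - W_j(0)W_j^\top(0)$ is then preserved for all $t > 0$, the claim follows.

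First I would compute $\partial \phi / \partial W_j$ explicitly. By the chain rule applied to $\phi(W_1,\ldots,W_n) = \ell(W_n W_{n-1} \cdots W_1)$, one obtains
\[
\tfrac{\partial}{\partial W_j}\phi(W_1,\ldots,W_n) \;=\; \bigl(W_n W_{n-1} \cdots W_{j+1}\bigr)^\top \, \nabla \ell\bigl(W_n W_{n-1} \cdots W_1\bigr) \, \bigl(W_{j-1} W_{j-2} \cdots W_1\bigr)^\top,
\]
with the convention that empty products equal the identity. Denote $A(t) := \nabla \ell(W_n(t) \cdots W_1(t))$, $P_j(t) := W_n(t) \cdots W_{j+1}(t)$, and $Q_j(t) := W_{j-1}(t) \cdots W_1(t)$. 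Then \eqref{eq:gf} reads $\dot W_j(t) = -P_j(t)^\top A(t) Q_j(t)^\top$, so in particular
\[
\dot W_{j+1}(t) \;=\; -P_{j+1}(t)^\top A(t) Q_{j+1}(t)^\top, \qquad \dot W_j(t) \;=\; -P_j(t)^\top A(t) Q_j(t)^\top.
\]

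The key observation is a simple ``telescoping'' identity between the indices $j$ and $j+1$: we have $Q_{j+1}(t) = W_j(t) Q_j(t)$ and $P_j(t) = P_{j+1}(t) W_{j+1}(t)$. Plugging these in:
\[
W_{j+1}(t)^\top \dot W_{j+1}(t) \;=\; - W_{j+1}(t)^\top P_{j+1}(t)^\top A(t) Q_j(t)^\top W_j(t)^\top \;=\; \dot W_j(t)\, W_j(t)^\top,
\]
and taking transposes yields $\dot W_{j+1}(t)^\top W_{j+1}(t) = W_j(t)\, \dot W_j(t)^\top$. Combining,
\[
\tfrac{d}{dt}\!\left[W_{j+1}^\top(t) W_{j+1}(t) - W_j(t) W_j^\top(t)\right] = \dot W_{j+1}^\top W_{j+1} + W_{j+1}^\top \dot W_{j+1} - \dot W_j W_j^\top - W_j \dot W_j^\top = 0,
\]
which is exactly the conservation claim.

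There is no real obstacle here; the only thing to be careful about is the bookkeeping of the chain rule (getting the transpose structure of $\partial\phi/\partial W_j$ right) and the edge cases $j = 1$ and $j+1 = n$, where $Q_1$ or $P_n$ is an empty product and should be read as the identity. Once those conventions are fixed, the cancellation above is immediate and the proof is complete by integrating from $0$ to $t$.
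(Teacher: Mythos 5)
Your proposal is correct and follows essentially the same route as the paper's proof: compute $\partial\phi/\partial W_j$ explicitly, use the telescoping structure of the products to derive the identity $W_{j+1}^\top\dot W_{j+1} = \dot W_j W_j^\top$, add its transpose, and integrate. The bookkeeping of the gradient and the edge-case conventions you flag match the paper exactly.
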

\begin{proof}
A straightforward derivation shows that:
\[
    \tfrac{\partial}{\partial W_j} \, \phi ( W_1 ( t ) , W_2 ( t ) , \ldots , W_n ( t ) ) = W_{n : j+1}^\top (t) \nabla \ell(W_{n : 1} (t)) W_{j-1 : 1}^\top (t) \text{\,,} ~~ t > 0 ~,~ j = 1 , 2 , \ldots , n \text{\,,}
\]
where $W_{n : j + 1} (t) := W_{n} (t) W_{n - 1} (t) \cdots W_{j + 1} (t)$ and $W_{j - 1: 1} (t) := W_{j - 1} (t) W_{j - 2} (t) \cdots W_{1} (t)$ for all $j = 1, 2, \ldots, n$ (by convention, $W_{n : n + 1} (t)$ and $W_{0 : 1} (t)$ stand for identity matrices).
Accordingly, under gradient flow (Equation~\eqref{eq:gf}):
\[
    W_{j+1}^\top (t) \dot{W}_{j+1} (t)= \dot{W}_j (t) W_{j}^\top (t) \text{\,,} ~~ t > 0 ~,~ j = 1 , 2 , \ldots , n-1\text{\,.}
\]
For $t > 0$ and $j = 1, 2, \ldots, n - 1$, adding the transpose of the equation above to itself gives:
\[
W_{j+1}^\top (t) \dot{W}_{j+1} (t) + \dot{W}_{j+1}^\top (t) W_{j+1} (t) = W_{j} (t) \dot{W}_{j}^\top (t) + \dot{W}_{j}(t) W_{j}^\top (t)\text{\,.}
\]
Notice that the left-hand side is simply $\tfrac{d}{dt}  (W_{j+1}^\top (t) W_{j+1} (t))$ while the right-hand side is $\tfrac{d}{dt}  (W_{j} (t) W_{j}^\top (t))$.
Hence, integrating with respect to time leads to:
\[
    W_{j+1}^\top (t) W_{j+1} (t) - W_{j+1}^\top (0) W_{j+1} (0) = W_{j} (t) W_{j}^\top (t) - W_{j} (0) W_{j}^\top (0)\text{\,.}
\]
Rearranging the equality completes the proof.
\end{proof}

If the constant matrix to which $W_{j + 1}^\top ( \cdot ) W_{j + 1} ( \cdot ) - W_j ( \cdot ) W_j^\top ( \cdot )$ equals is zero,
then for every $t > 0$ the matrices $W_j ( t )$ and~$W_{j + 1} ( t )$ admit a certain ``balancedness'' (\eg, the non-zero singular values of~$W_j ( t )$ coincide with those of~$W_{j + 1} ( t )$).
We accordingly make the following definition.
\begin{definition}
\label{def:unbalance}
The \emph{unbalancedness magnitude} of the weight matrices $W_1 , W_2 , \ldots , W_n$ is defined to be:
\[
\max\nolimits_{j = 1 , 2 , \ldots , n - 1} \| W_{j + 1}^\top W_{j + 1} - W_j W_j^\top \|_F
\text{\,,}
\]
where $\| \cdot \|_F$ stands for Frobenius norm.
\end{definition}
With Definition~\ref{def:unbalance} in place, Lemma~\ref{lemma:conserve} implies that the unbalancedness magnitude is conserved throughout optimization.
\begin{proposition}
\label{prop:unbalance_conserve}
For every $t > 0$, the unbalancedness magnitude of $W_1 ( t ) , W_2 ( t ) , \ldots , W_n ( t )$ is equal to that of $W_1 ( 0 ) , W_2 ( 0 ) , \ldots , W_n ( 0 )$.
\end{proposition}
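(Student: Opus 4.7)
The proposition is an essentially immediate corollary of Lemma~\ref{lemma:conserve}, so the plan is very short.

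The plan is to observe that the unbalancedness magnitude at time~$t$ is the maximum over $j = 1, 2, \ldots, n - 1$ of the Frobenius norms $\| W_{j+1}^\top(t) W_{j+1}(t) - W_j(t) W_j^\top(t) \|_F$, and each individual quantity inside the norm is, by Lemma~\ref{lemma:conserve}, equal to its value at time zero. Taking Frobenius norms on both sides of the identity supplied by the lemma gives equality of each of the $n - 1$ norms at time~$t$ with the corresponding norm at time~$0$, and then taking the maximum over~$j$ yields the claim.

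Concretely, I would write: fix $t > 0$; by Lemma~\ref{lemma:conserve}, for each $j \in \{1, 2, \ldots, n-1\}$ we have
\[
W_{j+1}^\top(t) W_{j+1}(t) - W_j(t) W_j^\top(t) = W_{j+1}^\top(0) W_{j+1}(0) - W_j(0) W_j^\top(0);
\]
applying $\| \cdot \|_F$ to both sides and then maximizing over $j$ gives, by Definition~\ref{def:unbalance},
\[
\max_{j} \| W_{j+1}^\top(t) W_{j+1}(t) - W_j(t) W_j^\top(t) \|_F = \max_{j} \| W_{j+1}^\top(0) W_{j+1}(0) - W_j(0) W_j^\top(0) \|_F,
\]
which is exactly the assertion that the unbalancedness magnitude at time~$t$ equals that at initialization.

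There is no real obstacle here: all the work was done in Lemma~\ref{lemma:conserve}, and the proposition is just the observation that a function (the $\max$ of Frobenius norms) of quantities that are constant in time is itself constant in time. The only thing to be careful about is simply matching the indexing and the definition; no additional analytic or algebraic content is required.
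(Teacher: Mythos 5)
Your proof is correct and follows exactly the paper's route: the paper also derives the proposition immediately from Lemma~\ref{lemma:conserve} together with Definition~\ref{def:unbalance}, and your write-up merely spells out the (trivial) step of applying $\|\cdot\|_F$ and taking the maximum over~$j$.
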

\begin{proof}
The result follows immediately from Lemma~\ref{lemma:conserve} and the definition of unbalancedness magnitude (Definition~\ref{def:unbalance}).
\end{proof}
In deep learning, the way that optimization is initialized has vital importance (see~\citet{sutskever2013importance}).
It is common practice to initialize near zero, and we accordingly direct our attention to this regime.
In our context, near-zero initialization implies that unbalancedness magnitude is small when optimization commences.
By Proposition~\ref{prop:unbalance_conserve}, this means that unbalancedness magnitude is small throughout optimization, and in particular, that unbalancedness magnitude becomes smaller and smaller relative to the weight matrices as optimization moves away from the origin.
We will make an idealized assumption of unbalancedness magnitude~zero.
\begin{assumption}
\label{assum:balance_init}
$W_1 ( 0 ) , W_2 ( 0 ) , \ldots , W_n ( 0 )$---the weight matrices at initialization---have unbalancedness magnitude zero.
\end{assumption}
Note that Assumption~\ref{assum:balance_init} does \emph{not} limit the expressiveness of the linear neural network, since any linear mapping can be expressed by a product of weight matrices with unbalancedness magnitude zero (see Procedure~1 in~\citet{arora2019convergence}).
It is possible to extend results derived for unbalancedness magnitude zero to the more general case of small unbalancedness magnitude (see, \eg,~\citet{razin2020implicit}), but for simplicity this will not be demonstrated.

The following definition will serve for reasoning about the dynamics of the input-to-output mapping realized by the network.
\begin{definition}
\label{def:e2e}
The \emph{end-to-end matrix} corresponding to the weight matrices $W_1 , W_2 , \ldots , W_n$ is:
\[
W_{n : 1} := W_n W_{n - 1} \cdots W_1
\text{\,.}
\]
\end{definition}
In accordance with Definition~\ref{def:e2e}, we denote by~$W_{n : 1} ( \cdot )$ the curve traversed by the end-to-end matrix during optimization, \ie, $W_{n : 1} ( \cdot ) := W_n ( \cdot ) W_{n - 1} ( \cdot ) \cdots W_1 ( \cdot )$.
At the heart of our dynamical analysis lies Theorem~\ref{theorem:e2e_dyn} below, which characterizes the dynamics of~$W_{n : 1} ( \cdot )$.
\begin{theorem}[end-to-end dynamics]
\label{theorem:e2e_dyn}
It holds that:
\be
\dot{W}_{n : 1} ( t ) := \tfrac{d}{dt} W_{n : 1} ( t ) = - \sum\nolimits_{j = 1}^n \hspace{-1mm} \left[ W_{n : 1} ( t ) W_{n : 1}^\top ( t ) \right]^\frac{j - 1}{n} \hspace{-1mm} \nabla \ell \big( W_{n : 1} ( t ) \big) \hspace{-1mm} \left[ W_{n : 1}^\top ( t ) W_{n : 1} ( t ) \right]^\frac{n - j}{n}
~~ , ~
t > 0
\text{\,,}
\label{eq:e2e_dyn}
\ee
where~$[\,\cdot\,]^\beta$, $\beta \in \R_{\geq 0}$, stands for a power operator defined over positive semidefinite matrices (with $\beta = 0$ yielding identity by definition).
\end{theorem}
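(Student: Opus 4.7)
The plan is to differentiate $W_{n : 1} ( t ) = W_n ( t ) \cdots W_1 ( t )$ via the product rule, substitute the gradient flow ODE for each $\dot{W}_j ( t )$, and then exploit the exact balancedness guaranteed by Assumption~\ref{assum:balance_init} (and propagated in time by Proposition~\ref{prop:unbalance_conserve}) to rewrite each layer-specific product $W_{n : j + 1} W_{n : j + 1}^\top$ and $W_{j - 1 : 1}^\top W_{j - 1 : 1}$ as a fractional power of the end-to-end Gram matrices $W_{n : 1} W_{n : 1}^\top$ and $W_{n : 1}^\top W_{n : 1}$, respectively.

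First, the product rule gives $\dot{W}_{n : 1} ( t ) = \sum_{j = 1}^{n} W_{n : j + 1} ( t ) \dot{W}_j ( t ) W_{j - 1 : 1} ( t )$. Substituting the gradient flow equation~\eqref{eq:gf} together with the formula $\partial \phi / \partial W_j = W_{n : j + 1}^\top \nabla \ell ( W_{n : 1} ) W_{j - 1 : 1}^\top$ already derived in the proof of Lemma~\ref{lemma:conserve} yields
\[
\dot{W}_{n : 1} ( t ) = - \sum_{j = 1}^{n} W_{n : j + 1} ( t ) W_{n : j + 1}^\top ( t ) \, \nabla \ell \bigl( W_{n : 1} ( t ) \bigr) \, W_{j - 1 : 1}^\top ( t ) W_{j - 1 : 1} ( t ) .
\]

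The heart of the proof, and the step I expect to be the main obstacle, is the algebraic identity $W_{n : j + 1} W_{n : j + 1}^\top = \bigl[ W_{n : 1} W_{n : 1}^\top \bigr]^{(n - j)/n}$ together with $W_{j - 1 : 1}^\top W_{j - 1 : 1} = \bigl[ W_{n : 1}^\top W_{n : 1} \bigr]^{(j - 1)/n}$. From Lemma~\ref{lemma:conserve} combined with Assumption~\ref{assum:balance_init}, exact balancedness $W_{k + 1}^\top ( t ) W_{k + 1} ( t ) = W_k ( t ) W_k^\top ( t )$ holds for every $k$ and $t$. I would then show by induction on the number of factors, expanding outward from the middle of the product, that $W_{m : \ell} W_{m : \ell}^\top = ( W_m W_m^\top )^{m - \ell + 1}$: at each step, one substitutes $W_k W_k^\top = W_{k + 1}^\top W_{k + 1}$ in the innermost position and applies the telescoping identity $W_{k + 1} ( W_{k + 1}^\top W_{k + 1} )^p W_{k + 1}^\top = ( W_{k + 1} W_{k + 1}^\top )^{p + 1}$, which raises the exponent by one as the product grows outward. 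Specializing to $m = n$, $\ell = 1$ yields $W_{n : 1} W_{n : 1}^\top = ( W_n W_n^\top )^n$; since $W_n W_n^\top$ is positive semidefinite its $n$-th root is uniquely defined, so $W_n W_n^\top = ( W_{n : 1} W_{n : 1}^\top )^{1/n}$ and consequently $W_{n : j + 1} W_{n : j + 1}^\top = ( W_n W_n^\top )^{n - j} = ( W_{n : 1} W_{n : 1}^\top )^{(n - j)/n}$. A symmetric argument, expanding from $W_1^\top W_1$ in the opposite direction, gives the companion identity. Plugging both into the expression for $\dot{W}_{n : 1} ( t )$ above and reindexing the sum via $k = n - j + 1$ reproduces the formula claimed in Theorem~\ref{theorem:e2e_dyn}.
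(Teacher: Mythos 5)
Your proposal is correct and follows the same overall route as the paper: product rule, substitution of the gradient flow ODE using the layer-wise gradient formula, and reduction of $W_{n:j+1}W_{n:j+1}^\top$ and $W_{j-1:1}^\top W_{j-1:1}$ to fractional powers of the end-to-end Gram matrices. The only genuine difference is how that central identity is established. The paper constructs singular value decompositions iteratively, using balancedness to align the right singular vectors of $W_j(t)$ with the left singular vectors of $W_{j-1}(t)$ and to equate all singular value matrices to a common $S$, whence $W_{n:j+1}(t) = U_n S^{n-j}V_{j+1}^\top$ and the fractional powers follow; to keep this clean the paper assumes all layers are square. Your argument instead telescopes from the inside out, repeatedly applying $W_k W_k^\top = W_{k+1}^\top W_{k+1}$ together with $W(W^\top W)^p W^\top = (WW^\top)^{p+1}$ to get $W_{n:1}W_{n:1}^\top = (W_n W_n^\top)^n$, and then invokes uniqueness of the positive semidefinite $n$-th root to conclude $W_n W_n^\top = [W_{n:1}W_{n:1}^\top]^{1/n}$. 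This is slightly more elementary (no SVD machinery) and works verbatim for rectangular layers, so it actually dispenses with the simplifying square-matrix assumption the paper makes; the paper's SVD construction, on the other hand, is reused later (e.g., in Corollary~\ref{coro:e2e_dyn_vec}) and makes the spectral structure of the preconditioner transparent. Your final reindexing $k = n-j+1$ matches the paper's ``reversing the summation order'' and correctly recovers the stated formula.
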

\begin{proof}
For simplicity of presentation, we assume that all weight matrices are square, \ie, that there exists a $d \in \N$ satisfying $d = d_0 = d_1 = \cdots = d_n$.
This assumption can be avoided at the cost of slightly more involved derivations (see proof of Theorem~1 in~\citet{arora2018optimization}).

For $j = 1, 2, \ldots, n$, we denote:
\[
W_{n : j + 1} (t) := W_{n} (t) W_{n - 1} (t) \cdots W_{j + 1} (t)
\text{~~and~~}
W_{j - 1: 1} (t) := W_{j - 1} (t) W_{j - 2} (t) \cdots W_{1} (t)
\text{\,,}
\]
where by convention, $W_{n : n + 1} (t)$ and $W_{0 : 1} (t)$ stand for identity matrices.

Differentiating $W_{n:1} (t)$ with respect to time using the product rule gives:
\be
\begin{split}
	\dot{W}_{n:1} (t) &= \sum \nolimits_{j = 1}^{n} W_{n:j+1} (t) \dot{W}_{j} (t) W_{j-1:1} (t) \\
		&= - \sum\nolimits_{ j = 1}^{n} W_{n: j + 1} (t) W_{n : j+1}^\top (t) \nabla \ell(W_{n : 1} (t)) W_{j-1 : 1}^\top (t) W_{j - 1: 1} (t) \text{\,,}
\end{split}
\label{eq:e2e_time_deriv_proof}
\ee
where the second equality is due to the fact that under gradient flow
\[
\begin{split}
	\dot{W}_j (t) & := - \tfrac{\partial}{\partial W_j} \, \phi ( W_1 ( t ) , W_2 ( t ) , \ldots , W_n ( t ) ) \\
	 &= - W_{n : j+1}^\top (t) \nabla \ell(W_{n : 1} (t) ) W_{j-1 : 1}^\top (t) \text{\,,} ~~ t > 0 ~,~j = 1 , 2 , \ldots , n \text{\,.}
\end{split}
\]
By Assumption~\ref{assum:balance_init} and Proposition~\ref{prop:unbalance_conserve}, the unbalancedness magnitude of $W_1(t), W_2 (t), \ldots, W_n (t)$ is zero for all $t \geq 0$. 
Consequently:
\be
    W_{j + 1}^\top (t) W_{j+1} (t) = W_{j} (t) W_{j}^\top (t) \text{\,,} ~~ t >0 ~,~ j = 1, 2, \ldots, n - 1
     \text{\,.}
     \label{eq:balancedness_eq}
\ee
Fix some $t > 0$.
We construct singular value decompositions for $W_1 (t), W_2 (t), \ldots, W_n (t)$ in an iterative process as follows.\footnote{%
For basic properties of the singular value decomposition see, \eg, Chapter 3 in~\citet{blum2020foundations}.
}
First, let $W_1 (t) = U_1 S_1 V_1^\top$ be an arbitrary singular value decomposition of $W_1 (t)$, \ie, $U_1, V_1 \in \R^{d \times d}$ are orthogonal matrices and $S_1 \in \R^{d \times d}$ is diagonal holding the singular values of~$W_1 (t)$.
Then, for $j = 2, 3, \ldots, n$, notice that any left singular vector $\uu$ of $W_{j - 1} (t)$ with corresponding singular value~$\sigma$ is an eigenvector of $W_{j - 1} (t) W_{j - 1}^\top (t)$ with corresponding eigenvalue~$\sigma^2$, and vice-versa.
Similarly, any right singular vector $\vv$ of $W_j (t)$ with corresponding singular value~$\sigma$ is an eigenvector of $W_j^\top (t) W_j (t)$ with corresponding eigenvalue~$\sigma^2$, and vice-versa.
Thus, Equation~\eqref{eq:balancedness_eq} implies that the left singular vectors of $W_{j - 1} (t)$ are equal to the right singular vectors of~$W_j (t)$, and their corresponding singular values are the same.
As a result, we may take a singular value decomposition $W_j (t) = U_j S_j V_j^\top$ satisfying $S_{j} = S_{j - 1}$ and $V_j = U_{j - 1}$.
Denote $S := S_1 = S_2 = \cdots = S_n$.

With the singular value decompositions described above, for every $j = 1, 2, \ldots, n  - 1$:
\[
\begin{split}
W_{n : j + 1} (t) & = W_n (t) W_{n - 1} (t) \cdots W_{j + 1} (t) \\
& = U_n S_n V_{n}^\top U_{n - 1} S_{n - 1} V_{n - 1}^\top \cdots U_{j + 1} S_{j + 1} V_{j + 1}^\top \\
& = U_n S U_{n - 1}^\top U_{n - 1} S U_{n - 2}^\top \cdots U_{j + 1} S V_{j + 1}^\top \\
& = U_n S^{ n - j } V_{j + 1}^\top
\text{\,.}
\end{split}
\]
Similarly, $W_{j - 1: 1} (t) =  U_{j - 1} S^{j - 1} V_1^\top$ and $W_{n : 1} (t) = U_n S^n V_1^\top$, from which it follows that:
\[
W_{n : j + 1}(t) W_{n : j + 1}^{\top}(t) = [W_{n:1}(t) W_{n:1}^{\top}(t)]^{\frac{n - j}{n}}
\]
and 
\[
W_{j-1:1}^{\top}(t) W_{j-1:1}(t) = [W_{n:1}^{\top}(t) W_{n:1}(t)]^{\frac{j - 1}{n}}
\text{\,.}
\]
Plugging these equalities into Equation~\eqref{eq:e2e_time_deriv_proof} and reversing the summation order concludes the proof.
\end{proof}
To gain insight into the end-to-end dynamics (Theorem~\ref{theorem:e2e_dyn}), we arrange the end-to-end matrix as a vector.
\begin{corollary}[vectorized end-to-end dynamics]
\label{coro:e2e_dyn_vec}
For an arbitrary matrix~$W$, denote by $vec ( W )$ its arrangement as a vector in column-first order.
Then:
\be
vec \big( \dot{W}_{n : 1} ( t ) \big) = - \P \big( W_{n : 1} ( t ) \big) vec \big( \nabla \ell \big( W_{n : 1} ( t ) \big) \big)
~ , ~
t > 0
\text{\,,}
\label{eq:e2e_dyn_vec}
\ee
where $\P : \R^{d_n \times d_0} \to \S_+^{d_n d_0}$, with $\S_+^{d_0 d_n}$ representing the set of $d_0 d_n \times d_0 d_n$ positive semidefinite matrices, is defined as follows.
Given $W \in \R^{d_n \times d_0}$ with singular values $\sigma_1 , \sigma_2 , \ldots , \sigma_{\max \{ d_0 , d_n \}} \geq 0$ (where by definition $\sigma_r = 0$ for $r > \min \{ d_0 , d_n \}$) and corresponding left and right singular vectors $\uu_1 , \uu_2 , \ldots , \uu_{d_n} \in \R^{d_n}$ and $\vv_1 , \vv_2 , \ldots , \vv_{d_0} \in \R^{d_0}$, respectively, $\P ( W ) \in \S_+^{d_0 d_n}$ has eigenvalues:
\[
\sum\nolimits_{j = 1}^n \sigma_r^{2 \frac{n - j}{n}} \sigma_{r'}^{2 \frac{j - 1}{n}}
~~ , ~
r = 1 , 2 , \ldots , d_n
~ , ~
r' = 1 , 2 , \ldots , d_0
\text{\,,}
\]
and corresponding eigenvectors:
\[
vec ( \uu_r \vv_{r'}^\top )
~~ , ~
r = 1 , 2 , \ldots , d_n
~ , ~
r' = 1 , 2 , \ldots , d_0
\text{\,.}
\]
\end{corollary}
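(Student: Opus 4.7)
The plan is to apply the column-first vectorization operator to both sides of the end-to-end dynamics derived in Theorem~\ref{theorem:e2e_dyn}, and then diagonalize the resulting linear operator that acts on $vec(\nabla \ell(W_{n:1}(t)))$. The key algebraic tool is the standard identity $vec(AXB) = (B^\top \otimes A)\,vec(X)$, where $\otimes$ denotes the Kronecker product. Applied term-by-term to Equation~\eqref{eq:e2e_dyn}, with $A = [W_{n:1}(t) W_{n:1}^\top(t)]^{(j-1)/n}$ and $B = [W_{n:1}^\top(t) W_{n:1}(t)]^{(n-j)/n}$, and using that $B$ is symmetric (being a real power of a positive semidefinite matrix), this immediately yields Equation~\eqref{eq:e2e_dyn_vec} with
\[
\P(W) = \sum\nolimits_{j=1}^{n} [W^\top W]^{(n-j)/n} \otimes [W W^\top]^{(j-1)/n}.
\]

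It then remains to identify the spectrum of $\P(W)$. I would start from an SVD $W = U\Sigma V^\top$, whence $WW^\top = U\Sigma^2 U^\top$ and $W^\top W = V\Sigma^2 V^\top$; taking real powers of PSD matrices gives $[WW^\top]^\beta = U\Sigma^{2\beta}U^\top$ and $[W^\top W]^\beta = V\Sigma^{2\beta}V^\top$ (with the convention $0^0:=1$ handling the $\beta=0$ case). Hence every summand of $\P(W)$ is diagonalized by the common Kronecker basis $\{v_{r'} \otimes u_r\}_{r,r'}$, and by the standard rule that a Kronecker product of matrices sharing eigenvector structure has eigenvalues equal to the products of the factor eigenvalues, the $j$-th summand contributes $\sigma_{r'}^{2(n-j)/n}\sigma_r^{2(j-1)/n}$ on the eigenvector $v_{r'}\otimes u_r$. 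Since all summands share this basis, adding them yields
\[
\P(W)(v_{r'}\otimes u_r) = \Big(\sum\nolimits_{j=1}^n \sigma_{r'}^{2(n-j)/n}\sigma_r^{2(j-1)/n}\Big)(v_{r'}\otimes u_r).
\]
The substitution $j \mapsto n-j+1$ inside the sum brings the eigenvalue into the form stated in the corollary, and the identity $vec(u_r v_{r'}^\top) = v_{r'}\otimes u_r$ (valid under column-first vectorization) rewrites the eigenvector as $vec(u_r v_{r'}^\top)$.

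The derivation is essentially a bookkeeping exercise once the two central identities~---~the vec/Kronecker rule and the spectrum of a Kronecker product~---~are in place, so I do not anticipate any substantive obstacle. The only points requiring care are (i)~keeping the column-first convention consistent (which places the transpose on the left factor in the Kronecker identity and fixes $vec(uv^\top)=v\otimes u$), and (ii)~observing the symmetry of the summation under $j \mapsto n-j+1$, which is what allows the eigenvalue to be written with the $\sigma_r$ and $\sigma_{r'}$ exponents swapped relative to the natural order coming out of the calculation.
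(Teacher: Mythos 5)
Your proposal is correct and follows essentially the same route as the paper's proof: vectorize Equation~\eqref{eq:e2e_dyn} via the vec/Kronecker identity to obtain $\P(W)=\sum_{j=1}^{n}[W^\top W]^{(n-j)/n}\otimes[WW^\top]^{(j-1)/n}$, then diagonalize each summand in the common basis $V\otimes U$ coming from the SVD of $W$. The paper packages the same computation slightly differently (two applications of $vec(AB)=(B^\top\otimes I)vec(A)=(I\otimes A)vec(B)$ followed by the mixed-product property, and an explicit orthogonal eigendecomposition $Q=(V\otimes U)\Lambda(V\otimes U)^\top$), but the substance, including the reindexing $j\mapsto n-j+1$ and the identification $vec(\uu_r\vv_{r'}^\top)=\vv_{r'}\otimes\uu_r$, is identical.
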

\begin{proof}
The proof relies on the notion of \emph{Kronecker product} between matrices.
For any $A \in \R^{d \times m}$ and $B \in \R^{p \times k}$, their Kronecker product is:
\[
A \otimes B := \begin{pmatrix}
	a_{1,1} B & \cdots & a_{1, m} B \\
	\vdots & \ddots & \vdots \\
	a_{d, 1} B & \cdots & a_{d, m} B
\end{pmatrix} \in \R^{dp \times mk}
\text{\,,}
\]
where $a_{i, j}$ stands for the $(i, j)$th entry of $A$, for $i = 1, 2, \ldots, d$ and $j = 1, 2, \ldots, m$. The Kronecker product upholds the following properties, which can be verified directly.
\begin{enumerate}[label={\textbf{P.\arabic*}},leftmargin=3em,topsep=0em]
	\item \label{kron_1} For any matrices $A$ and $B$ such that $AB$ is defined:
	\be
	vec (AB) = ( B^\top \otimes I_{r_A} ) vec(A) = ( I_{c_B} \otimes A ) vec(B)
	\label{eq:kron_1}
	\text{\,,}
	\ee
	where $I_{r_A}$ and $I_{c_B}$ are identity matrices with dimensions corresponding to the number of rows in $A$ and the number of columns in $B$, respectively.
	
	\item \label{kron_2} For any matrices $A_1, A_2, B_1$ and $B_2$ such that $A_1 B_1$ and $A_2 B_2$ are defined:
	\be
	(A_1 \otimes A_2)(B_1 \otimes B_2) = (A_1 B_1) \otimes (A_2 B_2)
	\text{\,.}
	\label{eq:kron_2}
	\ee
	
	\item \label{kron_3} For any matrices $A$ and $B$: $(A \otimes B)^\top = A^\top \otimes B^\top$.
	
	\item \label{kron_4} For any orthogonal matrices $A$ and $B$: $(A \otimes B)^\top = (A \otimes B)^{-1}$.
\end{enumerate}

With the above properties of the Kronecker product in place, we now derive the sought-after expression for $vec (\dot{W}_{n : 1} (t))$.
Fix some time $t > 0$.
By \ref{kron_1}, vectorizing the end-to-end dynamics from Theorem~\ref{theorem:e2e_dyn} leads to:
\begin{align*}
vec \big( \dot{W}_{n : 1} ( t ) \big)  & = - \sum\nolimits_{j = 1}^n  vec \left ( \big[ W_{n : 1} ( t ) W_{n : 1}^\top ( t ) \big ]^\frac{j - 1}{n} \nabla \ell \big( W_{n : 1} ( t ) \big) \big[ W_{n : 1}^\top ( t ) W_{n : 1} ( t ) \big ]^\frac{n - j}{n} \right )
\\ 
&= 
- \sum\nolimits_{j = 1}^n  \left ( \! I_{d_0} \otimes \big[ W_{n : 1} ( t ) W_{n : 1}^\top ( t ) \big ]^\frac{j - 1}{n} \! \right ) \left ( \! \big[ W_{n : 1}^\top ( t ) W_{n : 1} ( t ) \big ]^\frac{n - j}{n} \otimes I_{d_n} \! \right ) \! vec \big (\nabla \ell \big( W_{n : 1} ( t ) \big) \big) 
\text{\,.}
\end{align*}
Then, applying \ref{kron_2}, we arrive at:
\[
vec \big( \dot{W}_{n : 1} ( t ) \big) = -  \sum\nolimits_{j = 1}^n \left (\big[ W_{n : 1}^\top ( t ) W_{n : 1} ( t ) \big ]^\frac{n - j}{n} \otimes \big[ W_{n : 1} ( t ) W_{n : 1}^\top ( t ) \big ]^\frac{j - 1}{n} \right ) vec \big (\nabla \ell \big( W_{n : 1} ( t ) \big ) \big )
\text{\,.}
\]
Denote $Q := \sum\nolimits_{j = 1}^n \big[ W_{n : 1}^\top ( t ) W_{n : 1} ( t ) \big ]^\frac{n - j}{n} \otimes \big[ W_{n : 1} ( t ) W_{n : 1}^\top ( t ) \big ]^\frac{j - 1}{n}$.
It suffices to show that $Q$ adheres to the eigen-characterization of $\P \big( W_{n : 1} ( t ) \big)$.
Let $W_{n : 1} ( t ) = U S V^\top$ be a singular value decomposition of $W_{n : 1} ( t )$, \ie:
$U \in \R^{d_n \times d_n}$ and $V \in \R^{d_0 \times d_0}$ are orthogonal matrices whose respective columns $\uu_1, \uu_2, \ldots, \uu_{d_n} \in \R^{d_n}$ and $\vv_1, \vv_2, \ldots, \vv_{d_0} \in \R^{d_0}$ are left and right singular vectors of $W_{n : 1} (t)$, respectively;
and 
$S \in \R^{d_n \times d_0}$ holds the singular values of $W_{n : 1} (t)$---denoted $\sigma_1, \sigma_2, \ldots, \sigma_{\min \{ d_0, d_n \}} \geq 0$---on its main diagonal, and zeros elsewhere.
Plugging this singular value decomposition into the definition of~$Q$ gives:
\begin{align*}
Q & = \sum\nolimits_{j = 1}^n \big[ V S^\top S V^\top \big ]^\frac{n - j}{n} \otimes \big[ U S S^\top U^\top \big ]^\frac{j - 1}{n} \\
& = \sum\nolimits_{j = 1}^n \Big ( V \big [S^\top S \big ]^\frac{n - j}{n} V^\top \Big ) \otimes \Big ( U \big[ S S^\top \big ]^\frac{j - 1}{n} U^\top \Big ) \\
& = \sum\nolimits_{j = 1}^n \Big (V \otimes U \Big ) \Big ( \big [S^\top S \big ]^\frac{n - j}{n} \otimes \big[ S S^\top \big ]^\frac{j - 1}{n} \Big ) \Big ( V^\top \otimes U^\top \Big ) \\
& = \Big (V \otimes U \Big ) \Big ( \sum\nolimits_{j = 1}^n \big [S^\top S \big ]^\frac{n - j}{n} \otimes \big[ S S^\top \big ]^\frac{j - 1}{n} \Big ) \Big ( V \otimes U \Big )^\top
\text{\,,}
\end{align*}
where the penultimate and last transitions are per \ref{kron_2} and \ref{kron_3}, respectively.
Denoting $O := V \otimes U$ and \smash{$\Lambda := \sum\nolimits_{j = 1}^n [S^\top S ]^\frac{n - j}{n} \otimes [ S S^\top ]^\frac{j - 1}{n}$}, we have that $Q = O \Lambda O^\top$.
Since $V$ and $U$ are orthogonal, per~\ref{kron_4}, so is~$O$.
Furthermore, $\Lambda$ is diagonal, meaning that $O \Lambda O^\top$ is an orthogonal eigenvalue decomposition of~$Q$.
The proof concludes by noticing that the columns of~$O$ are:
\[
vec ( \uu_r \vv_{r'}^\top )
~~ , ~
r = 1 , 2 , \ldots , d_n
~ , ~
r' = 1 , 2 , \ldots , d_0
\text{\,,}
\]
and the corresponding diagonal elements of~$\Lambda$ are:
\[
\sum\nolimits_{j = 1}^n \sigma_r^{2 \frac{n - j}{n}} \sigma_{r'}^{2 \frac{j - 1}{n}}
~~ , ~
r = 1 , 2 , \ldots , d_n
~ , ~
r' = 1 , 2 , \ldots , d_0
\text{\,,}
\]
where by definition $\sigma_r = 0$ for $r > \min \{ d_0 , d_n \}$.
\end{proof}

Comparing the vectorized end-to-end dynamics (Equation~\eqref{eq:e2e_dyn_vec}) to direct minimization of the loss~$\ell ( \cdot )$ via gradient flow (\ie, to $\dot{W} ( t ) {=} - \nabla \ell ( W ( t ) )$, or equivalently, to $vec ( \dot{W} ( t ) ) = - vec ( \nabla \ell ( W ( t ) ) )$), we conclude that the use of a linear neural network boils down to introducing an \emph{implicit preconditioner}~$\P ( W_{n : 1} ( t ) )$, which depends on the location in parameter space~$W_{n : 1} ( t )$.
The eigenvalues and eigendirections of~$\P ( W_{n : 1} ( t ) )$ depend on the singular value decomposition of~$W_{n : 1} ( t )$, such that an increase in the size (singular value) of a singular component of~$W_{n : 1} ( t )$ leads to an increase in eigenvalues of~$\P ( W_{n : 1} ( t ) )$ along eigendirections associated with the singular component.
Qualitatively, this means that the preconditioner~$\P ( W_{n : 1} ( t ) )$ promotes movement along directions that align with the location in parameter space~$W_{n : 1} ( t )$.
With near-zero initialization (regime of interest), $W_{n : 1} ( t )$~can also be regarded as the movement made thus far during optimization.
Accordingly, the preconditioner~$\P ( W_{n : 1} ( t ) )$ may be interpreted as promoting movement in directions already taken, and therefore can be seen as inducing a certain momentum effect.
Implications of this effect to optimization and generalization will be studied in Sections \ref{sec:opt} and~\ref{sec:gen}, respectively.

\begin{remark}
It can be shown (see~\citet{arora2018optimization}) that the end-to-end dynamics induced by a linear neural network (Equation~\eqref{eq:e2e_dyn}) cannot be emulated via \emph{any} modification of the loss~$\ell ( \cdot )$, in particular through regularization.
More precisely, under mild conditions, the end-to-end dynamics cannot be expressed as gradient flow over any objective, in the sense that there exists no continuously differentiable function $f : \R^{d_n \times d_0} \to \R$ satisfying \smash{$\nabla f ( W ) = \sum_{j = 1}^n [ W W^\top ]^\frac{j - 1}{n} \nabla \ell ( W) [ W^\top W ]^\frac{n - j}{n}$} for all $W \in \R^{d_n \times d_0}$.
This may be proven by showing that the vector field $g : \R^{d_n \times d_0} \to \R^{d_n \times d_0}$ defined by \smash{$g ( W ) = \sum_{j = 1}^n [ W W^\top ]^\frac{j - 1}{n} \nabla \ell ( W) [ W^\top W ]^\frac{n - j}{n}$} violates the condition of the fundamental theorem for line integrals (namely, there exist closed curves in~$\R^{d_n \times d_0}$ over which the line integral of~$g ( \cdot )$ does not vanish).
The fact that it is mathematically impossible to mimic the dynamics of linear neural networks via standard means such as regularization, is an indication that a dynamical analysis may lead to new insights beyond those attainable using standard theoretical tools.
Sections \ref{sec:opt} and~\ref{sec:gen} will demonstrate this.
\end{remark}

\section{Optimization} \label{sec:opt}

In this section we study optimization of linear neural networks, \ie, minimization of the overparameterized objective~$\phi ( \cdot )$ defined in Equation~\eqref{eq:oprm_obj}.
Proposition~\ref{prop:oprm_obj_non_convex} in Section~\ref{sec:dyn} has shown that, aside from degenerate cases (ones in which the loss~$\ell ( \cdot )$ is globally minimized by the zero mapping), $\phi ( \cdot )$~is non-convex.
Proposition~\ref{prop:oprm_obj_non_strict} below further establishes that under a mild condition (namely, that $\ell ( \cdot )$ is not locally minimized by the zero mapping, which when $\ell ( \cdot )$ is convex is equivalent to the non-degeneracy condition of~$\ell ( \cdot )$ not being globally minimized by the zero mapping), if the network has three or more layers (\ie, if $n \geq 3$)---setting of interest in the context of deep learning---then $\phi ( \cdot )$ admits non-strict saddle points.\footnote{%
In our context, a non-strict saddle point is a stationary point that is not a local (or global) minimizer, but at which the Hessian is positive semidefinite.
}
Existence of non-strict saddle points poses a hurdle, as it implies that generic landscape arguments from the literature on non-convex optimization (see, \eg,~\citet{ge2015escaping,lee2016gradient}) will not be able to establish minimization of~$\phi ( \cdot )$.
Fortunately, the dynamical analysis of Section~\ref{sec:dyn} will allow us to overcome this hurdle.
\begin{proposition}
\label{prop:oprm_obj_non_strict}
If $\ell ( \cdot )$ does not attain a local minimum at the origin and $n \geq 3$, then $\phi ( \cdot )$ admits non-strict saddle points.
\end{proposition}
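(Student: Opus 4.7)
The plan is to exhibit the all-zeros configuration $(W_1 , W_2 , \ldots , W_n) = (0 , 0 , \ldots , 0)$ as the sought non-strict saddle point of $\phi ( \cdot )$, by verifying that it is (i) a stationary point, (ii) at which the Hessian is positive semidefinite, yet (iii) not a local minimizer. Property (i) is immediate from the gradient formula $\partial \phi / \partial W_j = W_{n:j+1}^\top \nabla \ell ( W_{n:1} ) W_{j-1:1}^\top$ derived inside the proof of Lemma~\ref{lemma:conserve}: at the origin and for every $j$, at least one of the two outer factors is a product containing a zero matrix, so the gradient vanishes.

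For (ii), I would expand $\phi ( \cdot )$ along an arbitrary smooth curve $(W_1 ( s ) , \ldots , W_n ( s ))$ through the origin and show $\ddot\phi ( 0 ) = 0$ in every direction. The end-to-end product $P ( s ) := W_n ( s ) \cdots W_1 ( s )$ depends $n$-linearly on the weight tuple, so after applying the product rule twice, each term appearing in $\dot P ( 0 )$ and $\ddot P ( 0 )$ still retains at least $n - 2$ of the original factors $W_k ( 0 ) = 0$. Because $n \geq 3$ forces $n - 2 \geq 1$, both vanish, and the chain rule yields $\ddot \phi ( 0 ) = \nabla^2 \ell ( 0 ) ( \dot P ( 0 ) , \dot P ( 0 ) ) + \inprod{\nabla \ell ( 0 )}{\ddot P ( 0 )} = 0$. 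Hence the Hessian at the origin is the zero matrix, which is trivially positive semidefinite. This is the sole step in which the hypothesis $n \geq 3$ is used.

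For (iii), the assumption that $\ell ( \cdot )$ is not locally minimized at the origin supplies, for every $\delta > 0$, some $M \in \R^{d_n \times d_0}$ with $\| M \|_F < \delta$ and $\ell ( M ) < \ell ( 0 )$. Using the hidden-dimension hypothesis $d_j \geq \min \{ d_0 , d_n \}$, a balanced SVD-based construction (akin to Procedure~1 of~\citet{arora2019convergence}, inserting $\Sigma^{1/n}$ into each layer of the SVD $M = U \Sigma V^\top$) factors $M$ as $W_n W_{n-1} \cdots W_1$ with $\| W_j \|_F$ of order $\| M \|_F^{1/n}$. Choosing $\delta$ of order $\epsilon^n$ then places these weights in an $\epsilon$-neighborhood of the origin while yielding $\phi = \ell ( M ) < \ell ( 0 ) = \phi ( 0 , \ldots , 0 )$, so the origin is not a local minimizer. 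The only real work lies in step (ii): the combinatorics of differentiating an $n$-fold product must be shown to kill every quadratic contribution when $n \geq 3$, and it is instructive to note that the analogous argument fails for $n = 2$, where the cross second derivative picks up the generically indefinite quadratic form $( \delta W_1 , \delta W_2 ) \mapsto \inprod{\nabla \ell ( 0 )}{\delta W_2 \, \delta W_1}$, making the origin a strict rather than non-strict saddle.
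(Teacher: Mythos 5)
Your proposal is correct and follows essentially the same route as the paper: exhibit the origin as the non-strict saddle, note the gradient and Hessian vanish there, and show it is not a local minimizer via a factorization of a nearby $M$ with $\ell(M)<\ell(0)$ into layers of Frobenius norm of order $\|M\|_F^{1/n}$. The only difference is that you spell out the Hessian computation (and the role of $n\geq 3$, with the instructive $n=2$ counterexample) that the paper dismisses as ``clearly zero.''
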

\begin{proof}
We will show that $\phi ( \cdot )$ admits a non-strict saddle point at the origin, \ie, at~$( 0 , 0 , \ldots , 0 )$.
The gradient and Hessian of~$\phi ( \cdot )$ at the origin are clearly zero, so it suffices to show that the origin is not a local minimizer of~$\phi ( \cdot )$.
Let $\epsilon > 0$.
By assumption there exists $W \in \R^{d_n \times d_0}$ satisfying $\ell (W) < \ell (0)$ and $\| W \|_F < \epsilon$ (recall that $\| \cdot \|_F$ stands for Frobenius norm).
Without loss of generality suppose that $d_0 \geq d_n$ (the case $d_0 < d_n$ can be treated analogously).
Define $W_1 \in \R^{d_1 \times d_0}$ to be the matrix holding \smash{$\| W \|_F^{- 1 + 1 / n} \cdot W$} in its top $d_n \times d_0$ block, and zeros elsewhere.
Let $I \in \R^{d_n \times d_n}$ be an identity matrix, and for each $j = 2 , 3 , \ldots , n$, define $W_j$ to be the matrix holding \smash{$\| W \|_F^{1 / n} \cdot I$} in its top-left $d_n \times d_n$ block, and zeros elsewhere.
By construction $W_n W_{n - 1} \cdots W_1 = W$, and therefore $\phi (W_1, W_2, \ldots, W_n) = \ell (W) < \ell (0) = \phi (0, 0, \ldots, 0)$.
Since $\| (W_1, W_2, \ldots, W_n) \|_F = \sqrt{n} \cdot \epsilon^{1 / n}$, and $\epsilon$ can be arbitrarily small, the origin (\ie, $(0, 0, \ldots, 0)$) is not a local minimizer of~$\phi(\cdot)$.
\end{proof}

\subsection{Convergence Guarantee} \label{sec:opt:cvg}

In this subsection we employ the dynamical analysis of Section~\ref{sec:dyn} for establishing convergence to global minimum.
The derivation will rely on the concept defined below.
\begin{definition}
\label{def:def_margin}
$W \in \R^{d_n \times d_0}$ is said to have \emph{deficiency margin} $\delta > 0$ if $\ell ( W ) < \ell ( W' )$ for every $W' \in \R^{d_n \times d_0}$ whose minimal singular value is at most~$\delta$.
\end{definition}
The following example illustrates the concept of deficiency margin.
\begin{example}
\label{example:def_margin}
Suppose $\ell ( \cdot )$ is the square loss for a linear regression task with $d_0$ input variables and $d_n$ output variables, \ie, $\ell ( W ) = \tfrac{1}{2 m} \| W X - Y \|_F^2$, where $m \in \N$ is the number of training examples, $X \in \R^{d_0 , m}$~holds training instances as columns, and $Y \in \R^{d_n , m}$~holds training labels as columns (in corresponding order).
Let $\Lambda_{x x} := \tfrac{1}{m} X X^\top \in \R^{d_0 \times d_0}$, $\Lambda_{y y} := \tfrac{1}{m} Y Y^\top \in \R^{d_n \times d_n}$ and $\Lambda_{y x} := \tfrac{1}{m} Y X^\top \in \R^{d_n \times d_0}$ be the empirical (uncentered) instance covariance matrix, label covariance matrix and label-instance cross-covariance~matrix,~respectively.
Using the fact that $\| A \|_F^2 = \tr ( A A^\top )$ for any matrix~$A$, it is straightforward to arrive at $\ell ( W ) = \tfrac{1}{2} \tr ( W \Lambda_{xx} W^\top ) - \tr ( W \Lambda_{y x}^\top ) + \tfrac{1}{2} \tr ( \Lambda_{y y} )$, and if we assume instances are whitened, \ie, $\Lambda_{x x}$ equals identity, then in addition $\ell ( W ) = \tfrac{1}{2} \| W - \Lambda_{y x} \|_F^2 + \text{\emph{const}}$, where $\text{\emph{const}}$ stands for a term that does not depend on~$W$.
This implies that $W  \in \R^{d_n \times d_0}$ has deficiency margin $\delta > 0$ if and only if $\| W - \Lambda_{y x} \|_F < \| W' - \Lambda_{y x} \|_F$ for every $W' \in \R^{d_n \times d_0}$ satisfying $\sigma_{\min} ( W' ) \leq \delta$, where $\sigma_{\min} ( \cdot )$ refers to the minimal singular value of a matrix.
By a standard matrix computation (see Exercise~\ref{exercise:mat_dist}) $\min \{ \| W' - \Lambda_{y x} \|_F : W' \in \R^{d_n \times d_0} , \sigma_{\min} ( W' ) \leq \delta \} = \max \{ 0 , \sigma_{\min} ( \Lambda_{y x} ) - \delta \}$.
We conclude that $W$ has deficiency margin~$\delta$ if and only if $\| W - \Lambda_{y x} \|_F < \sigma_{\min} ( \Lambda_{y x} ) - \delta$, meaning it lies within distance $\sigma_{\min} ( \Lambda_{y x} ) - \delta$ from $\Lambda_{y x}$, the global minimizer of~$\ell ( \cdot )$.
Note that in the case of a single output variable (\ie, $d_n = 1$) the condition $\| W - \Lambda_{y x} \|_F < \sigma_{\min} ( \Lambda_{y x} ) - \delta$ is equivalent to $\| W - \Lambda_{y x} \|_F < \| \Lambda_{y x} \|_F - \delta$, and therefore, if $W$ is randomly sampled from an isotropic distribution concentrated around the origin, the probability of it having a deficiency margin (with some $\delta > 0$) is close to~$0.5$.
\end{example}

Any guarantee of convergence to global minimum (for gradient-based optimization of~$\phi ( \cdot )$) must rely on assumptions that rule out the following scenarios:
\emph{(i)}~the loss~$\ell ( \cdot )$ is pathologically complex (\eg, it entails a myriad of sub-optimal local minimizers and a single global minimizer with a tiny basin of attraction);
and
\emph{(ii)}~optimization is initialized at a sub-optimal stationary point (note that, disregarding the degenerate case where~$\ell ( \cdot )$ is globally minimized by the zero mapping, the origin $( W_1 , W_2 , \ldots , W_n ) = ( 0 , 0 , \ldots , 0 )$ is a sub-optimal stationary point).
We accordingly assume the following:
\emph{(i)}~$\ell ( \cdot )$~is strongly convex\footnote{%
We say that~$\ell ( \cdot )$ is $\alpha$-strongly convex, with $\alpha > 0$, if for any $W , W' \in \R^{d_n \times d_0}$ and $c \in [ 0 , 1 ]$ it holds that $\ell \big( c \cdot W + ( 1 - c ) \cdot W' \big) \leq c \cdot \ell ( W ) + ( 1 - c ) \cdot \ell ( W' ) - \tfrac{1}{2} \alpha c ( 1 - c ) \cdot \| W - W' \|_F^2$.
}
(this does \emph{not} mean that the optimized objective~$\phi ( \cdot )$ is convex---see Propositions \ref{prop:oprm_obj_non_convex} and~\ref{prop:oprm_obj_non_strict});
and
\emph{(ii)}~at initialization, the end-to-end matrix~$W_{n : 1}$ (Definition~\ref{def:e2e}) has a deficiency margin (\ie, it meets the condition in Definition~\ref{def:def_margin} with some $\delta > 0$).
Notice that in the context of Example~\ref{example:def_margin} (square loss for linear regression with whitened instances), $\ell ( \cdot )$~is indeed strongly convex, and as discussed there, in the case of a single output variable, randomly sampling from an isotropic distribution concentrated around the origin leads to a deficiency margin with probability close to~$0.5$.

We are now in a position to present the main result of this subsection---a convergence guarantee born from the dynamical analysis of Section~\ref{sec:dyn}.
For conciseness, we overload notation by using~$\phi ( t )$, with $t \geq 0$, to refer to the value of the optimized objective~$\phi ( \cdot )$ at time~$t$ of optimization, \ie, $\phi ( t ) \,{:=}\, \phi ( W_1 ( t ) , W_2 ( t ) , \ldots , W_n ( t ) )$.
Additionally, we denote by~$\phi^*$ the global minimum of~$\phi ( \cdot )$, \ie, $\phi^* {:=} \inf_{W_j \in \R^{d_j \times d_{j-1}} , j = 1 , 2 , \ldots , n} \phi ( W_1 , W_2 , \ldots , W_n )$.

\begin{theorem}
\label{theorem:lnn_converge}
Assume $\ell ( \cdot )$ is $\alpha$-strongly convex for some $\alpha > 0$, and the end-to-end matrix at initialization $W_{n : 1} ( 0 )$ has deficiency margin~$\delta$ for some $\delta > 0$.
Then, for any $\epsilon > 0$, it holds that $\phi ( t ) - \phi^* \leq \epsilon$ for every time~$t$ satisfying $t \geq \ln \big( (\phi ( 0 ) - \phi^*) / \epsilon \big) \big/ \big( 2 \alpha \delta^{\, 2 ( n - 1 ) / n} \big)$.
\end{theorem}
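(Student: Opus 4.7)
The plan is to combine three ingredients: (i)~the deficiency margin is preserved along the gradient-flow trajectory, (ii)~the end-to-end dynamics of Theorem~\ref{theorem:e2e_dyn} yield a pointwise lower bound on $-\dot\phi(t)$ in terms of $\sigma_{\min}(W_{n:1}(t))^{2(n-1)/n}\|\nabla\ell(W_{n:1}(t))\|_F^2$, and (iii)~$\alpha$-strong convexity of~$\ell(\cdot)$ converts a squared-gradient bound into a functional one via the Polyak--Łojasiewicz inequality $\|\nabla\ell(W)\|_F^2 \geq 2\alpha\bigl(\ell(W)-\ell^*\bigr)$. Chaining these gives a linear ODE in $\phi(t)-\phi^*$, which integrates to the stated convergence time. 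Note $\phi^* = \ell^* := \inf_W \ell(W)$, since under the standing hidden-dimension assumption every $W\in\R^{d_n\times d_0}$ is realizable as a product $W_n\cdots W_1$.

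For step~(i), observe first that $\phi(\cdot)$ is non-increasing along gradient flow: applying Theorem~\ref{theorem:e2e_dyn} and the identity $\langle A, PAP'\rangle = \|P^{1/2}AP'^{1/2}\|_F^2$ for positive semidefinite $P,P'$, one sees that each of the $n$ summands in $-\dot\phi(t) = \langle \nabla\ell(W_{n:1}(t)),-\dot W_{n:1}(t)\rangle$ is nonnegative, so $\phi(t)\leq\phi(0)$ for all $t\geq 0$. Now suppose for contradiction that $\sigma_{\min}(W_{n:1}(t))\leq\delta$ at some $t>0$. By continuity of $\sigma_{\min}(\cdot)$ and the fact that $\sigma_{\min}(W_{n:1}(0))>\delta$ (which itself follows from Definition~\ref{def:def_margin} applied to $W_{n:1}(0)$), there is a first time $\tau>0$ with $\sigma_{\min}(W_{n:1}(\tau))=\delta$. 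The deficiency margin property of $W_{n:1}(0)$ then forces $\ell(W_{n:1}(0))<\ell(W_{n:1}(\tau))$, contradicting monotone non-increase of $\phi$. Hence $\sigma_{\min}(W_{n:1}(t))>\delta$ for all $t\geq 0$.

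For step~(ii), consider the $j=n$ term in the end-to-end dynamics if $d_n\leq d_0$ (and the $j=1$ term otherwise; the argument is symmetric). In this case $W_{n:1}(t)W_{n:1}^\top(t)$ is $d_n\times d_n$ and, by step~(i), has full rank with smallest eigenvalue strictly greater than $\delta^2$. Therefore $[W_{n:1}(t)W_{n:1}^\top(t)]^{(n-1)/n}\succeq \delta^{2(n-1)/n} I_{d_n}$, which in combination with nonnegativity of the other summands yields
\[
-\dot\phi(t) \;\geq\; \operatorname{tr}\!\Bigl(\nabla\ell(W_{n:1}(t))^\top\,[W_{n:1}(t)W_{n:1}^\top(t)]^{(n-1)/n}\,\nabla\ell(W_{n:1}(t))\Bigr) \;\geq\; \delta^{2(n-1)/n}\,\|\nabla\ell(W_{n:1}(t))\|_F^2.
\]
(Alternatively, one can read this off from Corollary~\ref{coro:e2e_dyn_vec} by noting that on the invariant subspace spanned by those $vec(\uu_r\vv_{r'}^\top)$ with $r\leq\min\{d_0,d_n\}$, the preconditioner eigenvalues are $\geq \delta^{2(n-1)/n}$.)

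Finally, for step~(iii), $\alpha$-strong convexity of $\ell(\cdot)$ gives $\ell(W)-\ell^*\leq \tfrac{1}{2\alpha}\|\nabla\ell(W)\|_F^2$ by the standard Polyak--Łojasiewicz inequality (obtained by minimizing the strong-convexity lower bound over the second argument). Combining with step~(ii) produces
\[
\dot\phi(t) \;\leq\; -\,2\alpha\,\delta^{2(n-1)/n}\,\bigl(\phi(t)-\phi^*\bigr),
\]
and Grönwall's inequality gives $\phi(t)-\phi^*\leq(\phi(0)-\phi^*)\exp(-2\alpha\delta^{2(n-1)/n}\,t)$; requiring the right-hand side to be at most $\epsilon$ yields the stated time bound. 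The main obstacle is step~(ii): while the end-to-end dynamics formally exhibit a degenerate preconditioner whenever $d_0\neq d_n$ (some eigenvalues of $\P(W_{n:1}(t))$ vanish), one must argue that these degenerate directions do not appear in the relevant expression---this is what is achieved by selecting the $j\in\{1,n\}$ term corresponding to the \emph{smaller} of the two factor matrices, which is automatically full rank under the preserved deficiency margin.
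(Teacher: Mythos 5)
Your proof is correct and follows essentially the same route as the paper's: chain rule plus the end-to-end dynamics to obtain $\dot\phi(t) \leq -\,\sigma_{\min}(W_{n:1}(t))^{2(n-1)/n}\|\nabla\ell(W_{n:1}(t))\|_F^2$, preservation of the deficiency margin via monotonicity of the loss, the Polyak--{\L}ojasiewicz consequence of strong convexity, and Gr\"onwall's inequality. The only cosmetic difference is that the paper bounds $\lambda_{\min}\big(\P(W_{n:1}(t))\big)$ directly from Corollary~\ref{coro:e2e_dyn_vec} whereas you isolate a single summand of the matrix-form dynamics; note that the ``degeneracy'' you flag as the main obstacle in step~(ii) is not actually present, since under the convention $[\,\cdot\,]^0 = I$ every eigenvalue $\sum_{j=1}^n \sigma_r^{2(n-j)/n}\sigma_{r'}^{2(j-1)/n}$ of the preconditioner is at least $\sigma_{\min}^{2(n-1)/n}$ even when $d_0 \neq d_n$ (the indices $r,r'$ cannot both exceed $\min\{d_0,d_n\}$), so both routes give the same bound.
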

\begin{proof}
From the chain rule we have:
\[
\tfrac{d}{dt} \ell ( W_{n : 1} ( t ) ) = vec \big ( \nabla \ell ( W_{n : 1} ( t ) ) \big )^\top vec \big ( \dot{W}_{n : 1} (t) \big )
~ , ~
t > 0
\text{\,.}
\]
Plugging in the vectorized end-to-end dynamics (Equation~\eqref{eq:e2e_dyn_vec}) gives:
\[
\tfrac{d}{dt} \ell ( W_{n : 1} ( t ) ) = - vec \big ( \nabla \ell ( W_{n : 1} ( t ) ) \big )^\top \P \big ( W_{n : 1} ( t ) \big ) vec \big ( \nabla \ell ( W_{n : 1} ( t ) ) \big )
~ , ~
t > 0
\text{\,,}
\]
where $\P \big ( W_{n : 1} ( t ) \big )$ is a positive semidefinite matrix upholding \smash{$\lambda_{\min} \big ( \P \big ( W_{n : 1} ( t ) \big ) \big ) \geq \sigma_{\min} \big ( W_{n : 1} (t) \big )^{2 ( n - 1) / n}$}, with $\lambda_{\min} (\cdot)$ and $\sigma_{\min} (\cdot)$ referring to the minimal eigenvalue and the minimal singular value of a matrix, respectively.
Therefore:
\[
\tfrac{d}{dt} \ell ( W_{n : 1} ( t ) ) \leq - \sigma_{\min} \big ( W_{n : 1} (t) \big )^{2 ( n - 1) / n} \cdot \norm{ \nabla \ell ( W_{n : 1} ( t ) ) }_F^2
~ , ~
t > 0
\text{\,.}
\]
Notice that $\tfrac{d}{dt} \ell ( W_{n : 1} ( t ) ) \leq 0$, and so $\ell (W_{n : 1} ( t ) )$ is monotonically non-increasing as a function of~$t$.
Our assumption on $W_{n : 1} (0)$ having deficiency margin~$\delta$ therefore implies that $W_{n : 1} (t)$ has deficiency margin $\delta$ for every $t \geq 0$.
Accordingly, $\sigma_{\min} \big ( W_{n : 1} (t) \big ) > \delta$ for every $t \geq 0$, and we get:
\[
\tfrac{d}{dt} \ell ( W_{n : 1} ( t ) ) \leq - \delta^{2 ( n - 1) / n} \cdot \norm{ \nabla \ell ( W_{n : 1} ( t ) ) }_F^2
~ , ~
t > 0
\text{\,.}
\]
Denote $\ell^* := \inf_{W \in \R^{d_n \times d_0}} \ell (W)$.
Since $\ell (\cdot)$ is $\alpha$-strongly convex, it holds that $\norm{ \nabla \ell ( W ) }_F^2 \geq 2 \alpha ( \ell ( W )  - \ell^* )$ for every $W \in \R^{d_n \times d_0}$.
This leads to:
\[
\tfrac{d}{dt} \ell ( W_{n : 1} ( t ) ) \leq - 2 \alpha \delta^{2 ( n - 1) / n} \big (  \ell ( W_{n : 1} ( t ) )  - \ell^* \big )
~ , ~
t > 0
\text{\,,}
\]
which by Gr\"onwall's inequality implies that:
\[
\ell (W_{n : 1} (t) ) - \ell^* \leq ( \ell( W_{n : 1} (0)) - \ell^*) \exp \big (  - 2 \alpha \delta^{2 ( n - 1) / n} t \big )
~ , ~
t > 0
\text{\,.}
\]
Let $\epsilon > 0$.
Since $\phi^* = \ell^*$ and $\phi (t) = \ell (W_{n : 1} (t))$ for all~$t$, it holds that $\phi (t) - \phi^* \leq \epsilon$ for every~$t$ satisfying $t \geq \ln \big( (\phi ( 0 ) - \phi^*) / \epsilon \big) \big/ \big( 2 \alpha \delta^{\, 2 ( n - 1 ) / n} \big)$, which is what we set out to prove.
\end{proof}

\begin{remark}
The reader is referred to~\citet{arora2019convergence} for a variant of Theorem~\ref{theorem:lnn_converge} that applies to the case where gradient descent has positive (as opposed to infinitesimal) step size.
This variant is derived via discrete arguments analogous to the continuous ones underlying Theorem~\ref{theorem:lnn_converge}.
It establishes the following.
Consider the setting of Example~\ref{example:def_margin} (square loss for linear regression with whitened instances), in which the loss~$\ell ( \cdot )$ has the form $\ell ( W ) = \tfrac{1}{2} \| W - \Lambda_{y x} \|_F^2 + \text{\emph{const}}$, where $\Lambda_{yx} \in \R^{d_n \times d_0}$, and $\text{\emph{const}}$ stands for a term that does not depend on~$W$.
Assume that gradient descent is applied to~$\phi ( \cdot )$ such that at initialization, for some $\delta > 0$, the end-to-end matrix~$W_{n : 1}$ has deficiency margin~$\delta$, and the weight matrices $W_1 , W_2 , \ldots , W_n$ have unbalancedness magnitude (Definition~\ref{def:unbalance}) of at most $( 256 n^3 \| \Lambda_{y x} \|_F^{2 ( n - 1 ) / n} )^{-1} \cdot \delta^2$.\footnote{%
Note that the admission of positive unbalancedness magnitude forms a relaxation of Assumption~\ref{assum:balance_init} on which Theorem~\ref{theorem:lnn_converge} relies.
}
Suppose the step size~$\eta$ of gradient descent equals \smash{$( 6144 n^3 \| \Lambda_{y x} \|_F^{( 6 n - 4 ) / n} )^{-1} \cdot \delta^{( 4 n - 2 ) / n}$} or less.
Then, with $\phi ( t )$ representing the value~of~$\phi ( \cdot )$ after iteration~$t \in \N \cup \{ 0 \}$ of optimization, and $\phi^*$ standing for the global minimum of~$\phi ( \cdot )$, for any $\epsilon > 0$, if $t \geq ( \eta \delta^{2 ( n - 1 ) / n} )^{-1} \cdot \ln ( \phi ( 0 ) / \epsilon )$, it holds that $\phi ( t ) - \phi^* \leq \epsilon$.
\end{remark}

\subsection{Implicit Acceleration by Overparameterization} \label{sec:opt:acc}

\begin{figure}[t]
	\centering
	\includegraphics[width=0.45\textwidth]{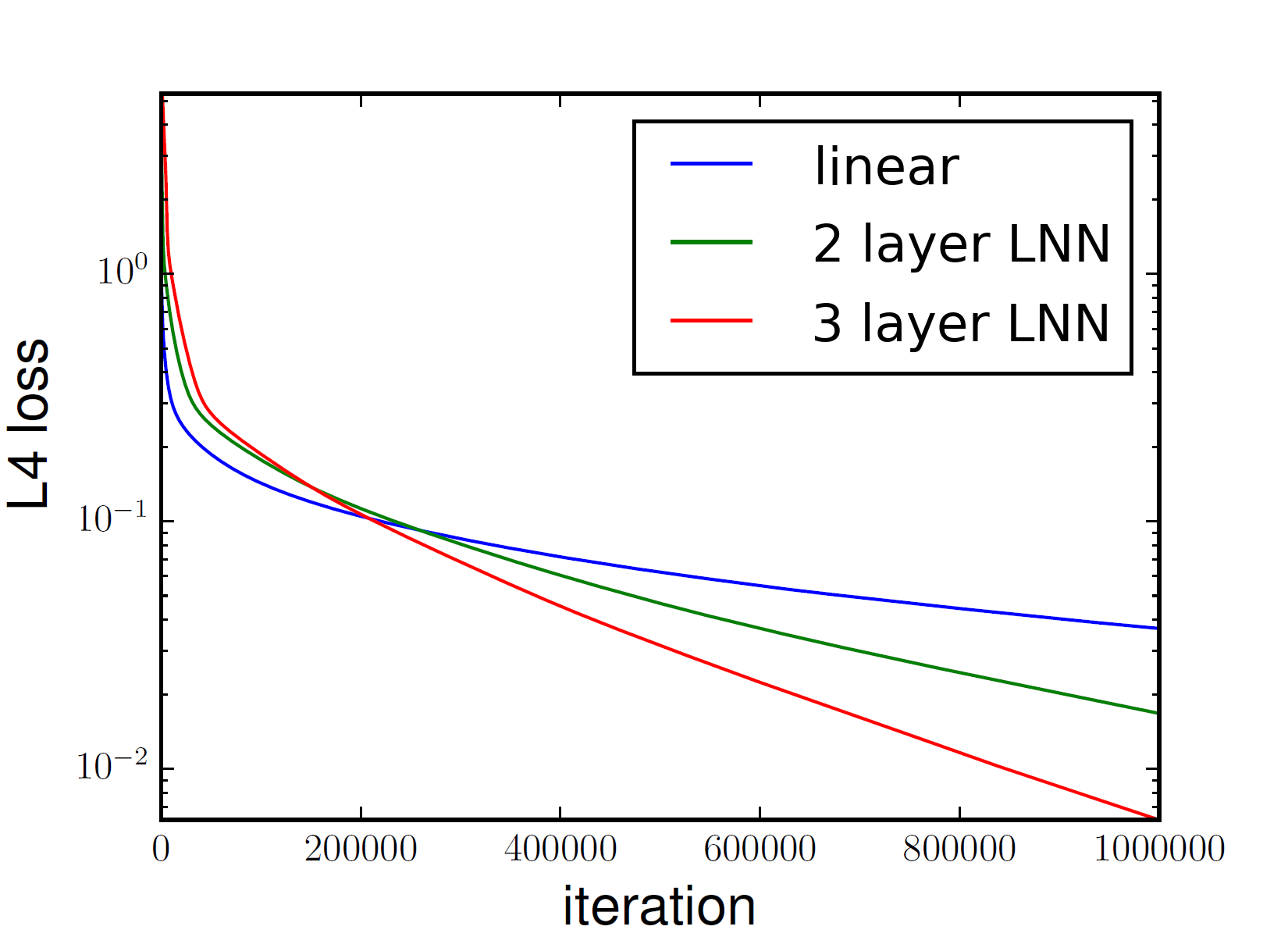} ~
	\includegraphics[width=0.45\textwidth]{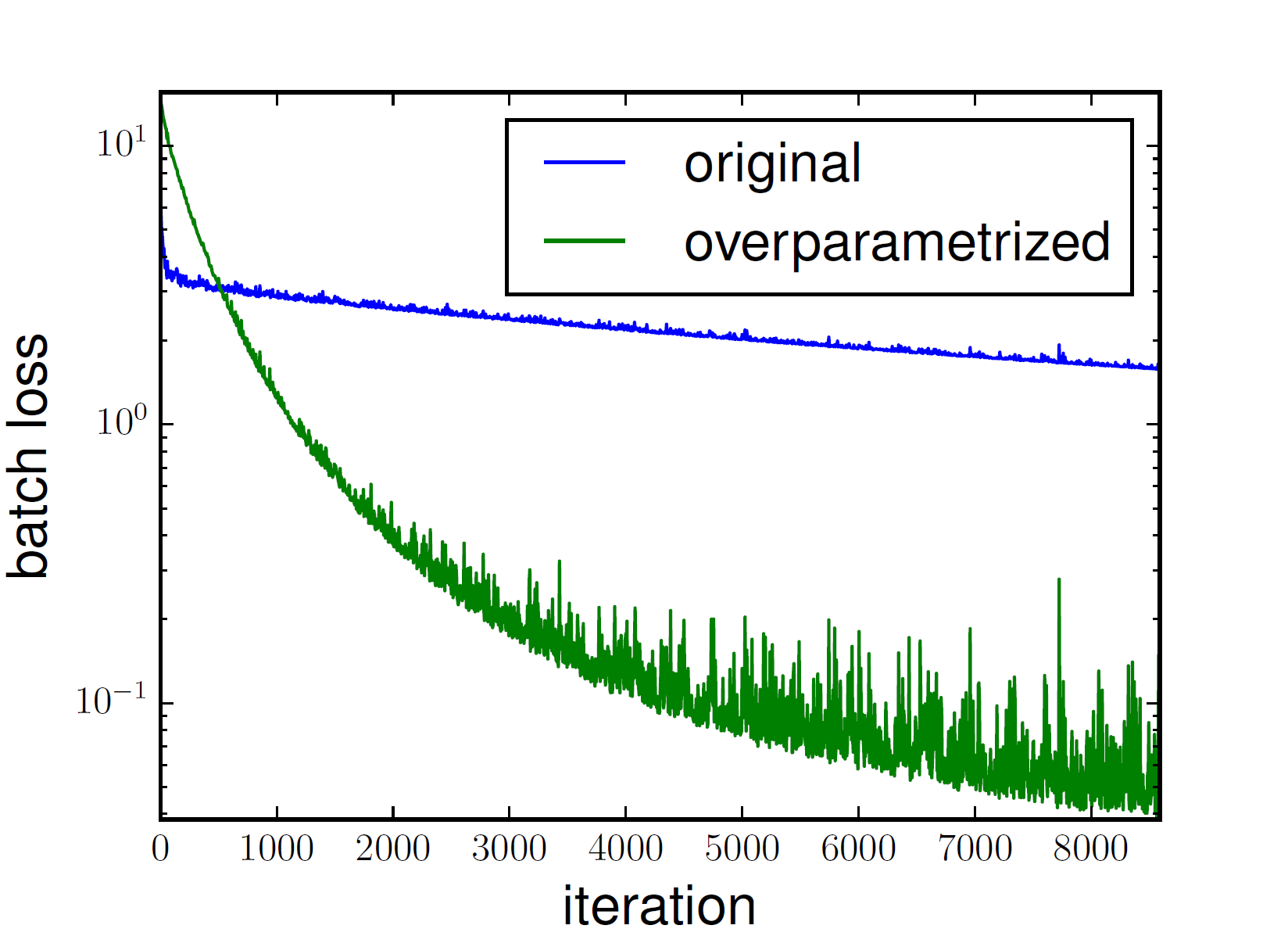}
	\caption{
		Empirical demonstrations of implicit acceleration by overparameterization, \ie, by replacement of linear transformations with linear neural networks.
		Left plot compares gradient descent applied to $\ell_4$~loss for a linear regression task (``linear''), against its application to the overparameterized objectives induced by two and three layer linear neural networks (``2~layer LNN'' and ``3~layer LNN,'' respectively).
		Right plot compares stochastic gradient descent (with momentum) optimizing a non-linear convolutional neural network (``original''), against it optimizing a model obtained from replacing each dense linear transformation with a two layer linear neural network (``overparameterized'').
		For~details see \citet{arora2018optimization}, from which results are taken.
	}
	\label{fig:lnn_acc}
\end{figure}

Subsection~\ref{sec:opt:cvg} employed the dynamical analysis of Section~\ref{sec:dyn} to circumvent the difficulty in establishing convergence to global minimum using landscape arguments (see Proposition~\ref{prop:oprm_obj_non_strict} and preceding text).
In this subsection we discuss a result that further attests to the potential of dynamical analyses.

Classical machine learning is rooted in a preference for optimization to be convex.
In our context, this means that if the loss~$\ell ( \cdot )$ is convex (setting of interest), optimizing it directly is preferred to doing so with a linear neural network, \ie, to minimizing the overparameterized objective~$\phi ( \cdot )$ defined in Equation~\eqref{eq:oprm_obj} (this is because, as Proposition~\ref{prop:oprm_obj_non_convex} in Section~\ref{sec:dyn} shows, aside from degenerate cases, $\phi ( \cdot )$~is non-convex).
A surprising implication of the dynamical analysis of Section~\ref{sec:dyn} is that such preference may be misleading---there exist cases where $\ell ( \cdot )$ is convex, and yet its optimization via gradient descent is much slower than that of~$\phi ( \cdot )$.
We present this result informally below, referring to~\citet{arora2018optimization} for further details.
\begin{claim}[informally stated]
\label{claim:lnn_acc}
Let~$\epsilon > 0$.
Consider minimization of~$\ell ( \cdot )$ via gradient descent with (fixed) step size~$\eta > 0$, and denote by~$t_{\ell , \eta}$ the index of the first iteration in which the value of~$\ell ( \cdot )$ is within~$\epsilon$ from its global minimum (where $t_{\ell , \eta} := \infty$ if this never takes place).
Consider also a discrete minimization of~$\phi ( \cdot )$, or more precisely, a discretization with step size~$\zeta > 0$ of the end-to-end dynamics from Section~\ref{sec:dyn} (with the notations of Corollary~\ref{coro:e2e_dyn_vec}, this amounts to $vec \big( W^{( t + 1 )} \big) \mapsfrom vec \big( W^{( t )} \big) - \zeta \P \big( W^{( t )} \big) vec \big( \nabla \ell ( W^{( t )} ) \big)$ for $t \in \N \cup \{ 0 \}$).
Denote by~$t_{\phi , \zeta}$ the index of the first iteration in this minimization where~$\ell ( \cdot )$ is within~$\epsilon$ from its global minimum (if this never takes place then $t_{\phi , \zeta} := \infty$).
Denote by $t_\ell$ and~$t_\phi$ the optimal convergence times over $\ell ( \cdot )$ and~$\phi ( \cdot )$, respectively, \ie, $t_\ell := \min_{\eta > 0} t_{\ell , \eta}$ and $t_\phi := \min_{\zeta > 0} t_{\phi , \zeta}$.
Let~$c \in \R_{> 0}$ be arbitrarily large, and let $p \in \{ 4 , 6 , 8 , \ldots \}$.
Then, there exist cases where $\ell ( \cdot )$ is the $\ell_p$~loss for a linear regression task (\ie, it has the form $\ell ( W ) = \tfrac{1}{m} \sum\nolimits_{i = 1}^m \| W \x_i - \y_i \|_p^p$ where $\x_1 , \x_2 , \ldots , \x_m \in \R^{d_0}$ are training instances and \smash{$\y_1 , \y_2 , \ldots , \y_m \in \R^{d_n}$} their corresponding labels), thus in particular is convex, and yet a near-zero initialization leads to $t_\ell > c t_\phi$.
\end{claim}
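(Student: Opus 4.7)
The plan is to construct, for each given $c > 0$ and each given $p \in \{4,6,8,\ldots\}$, a concrete scalar linear regression instance (thereby convex $\ell(\cdot)$) together with a choice of target accuracy $\epsilon$ and near-zero balanced initialization, for which the $\epsilon$-convergence time $t_\ell$ of discrete gradient descent on $\ell$ (optimized over $\eta$) strictly exceeds $c \cdot t_\phi$. I would follow the construction of \citet{arora2018optimization}: take $d_0 = d_n = 1$, a single training example $(x_1, y_1) = (1, y^*)$ with a fixed $y^* > 0$, and depth $n = 2$, so that $\ell(w) = \tfrac{1}{p}(w - y^*)^p$ and $\phi(w_1, w_2) = \tfrac{1}{p}(w_2 w_1 - y^*)^p$. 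Initialize with a balanced near-zero $w_1(0) = w_2(0) = w_0$ for some small $w_0 > 0$, so Assumption~\ref{assum:balance_init} holds and the end-to-end scalar is $w_{n:1}(0) = w_0^2$. The free parameters $y^*$, $w_0$ and $\epsilon$ will all be tuned (as a function of $c$ and $p$) at the end.

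For the lower bound on $t_\ell$, I would argue that any fixed step size $\eta$ is constrained from above by the local Lipschitz constant of $\nabla \ell$ along the iterates. Since $|\nabla^2 \ell(w)| = (p-1)|w - y^*|^{p-2}$ attains size $\Theta((y^*)^{p-2})$ near the initialization $w \approx 0$, a standard descent-lemma obstruction yields $\eta \lesssim 1/[(p-1)(y^*)^{p-2}]$, for otherwise the first few iterates would fail to be non-increasing in $\ell$ and even diverge. In the tail regime where the residual $\delta := |w - y^*|$ is small, each gradient step moves $w$ by at most $\eta p \delta^{p-1}$. Integrating this polynomial decay from $\delta = \Theta(y^*)$ down to $\delta = \Theta(\epsilon^{1/p})$ (the level at which $\ell$ is within $\epsilon$ of its minimum) produces a lower bound of the form $t_{\ell,\eta} \gtrsim 1/(\eta\, p(p-2))\cdot \epsilon^{-(p-2)/p}$. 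Combining with the step-size ceiling gives $t_\ell = \min_\eta t_{\ell,\eta} \geq C_p\, (y^*)^{p-2}\, \epsilon^{-(p-2)/p}$ for an explicit constant $C_p$.

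For the upper bound on $t_\phi$, I would use the end-to-end dynamics (Theorem~\ref{theorem:e2e_dyn} / Corollary~\ref{coro:e2e_dyn_vec}), which for depth $n=2$ in the scalar case reduce to $\dot w_{n:1} = -2\,|w_{n:1}|\, \nabla \ell(w_{n:1})$. Passing to the discretization $w^{(t+1)} = w^{(t)} - 2\zeta\, |w^{(t)}|\, \nabla\ell(w^{(t)})$, the effective step size $2\zeta|w^{(t)}|$ is \emph{adaptive}: it vanishes near the origin where the Hessian of $\phi$ also vanishes (so stability is not violated even with macroscopic $\zeta$), and grows to $\Theta(\zeta y^*)$ once $w^{(t)} \approx y^*$, which matches the scale needed to offset the flattening of $\nabla \ell$. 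Concretely, I would split the trajectory into an escape phase (where $w^{(t)}$ grows geometrically from $w_0^2$ to a constant multiple of $y^*$) and a convergence phase (where $w^{(t)}$ approaches $y^*$); a straightforward Gr\"onwall-type computation, analogous to the one in the proof of Theorem~\ref{theorem:lnn_converge}, shows that in both phases the residual decays linearly, yielding $t_\phi = O(\log(1/\epsilon) + \log(1/w_0))$ with the $O(\cdot)$ depending only on $p$ and $y^*$.

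Putting the bounds together, the ratio satisfies $t_\ell/t_\phi \gtrsim (y^*)^{p-2}\,\epsilon^{-(p-2)/p}/[\log(1/\epsilon) + \log(1/w_0)]$, which tends to infinity as $\epsilon \to 0$ for any fixed $y^*, w_0, p$; so choosing $\epsilon$ sufficiently small as a function of $c$ and $p$ (and fixing $y^*, w_0$ to any convenient constants, e.g.\ $y^* = 1$ and $w_0$ subpolynomially small) yields $t_\ell > c\, t_\phi$, as required. The main obstacle I foresee is the upper bound on $t_\phi$: it requires a genuinely \emph{discrete} analysis of the preconditioned recursion, where one must pick a concrete $\zeta$ and verify linear contraction both in the escape phase (when $\P(w^{(t)})$ is tiny and can make the method stall if $\zeta$ is not large enough) and in the convergence phase (where $\zeta$ must not be so large as to overshoot the minimum given the local curvature of $\phi$). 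The lower bound on $t_\ell$, by contrast, is the standard smoothness/step-size balancing familiar from non-strongly-convex lower bounds on GD.
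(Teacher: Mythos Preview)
Your high-level picture matches the paper's proof sketch (steep-then-flat $\ell_p$ landscape; plain gradient descent is throttled by the steep region while the end-to-end preconditioner adapts), but the concrete one-dimensional single-sample construction you propose does not realize the separation, and both of your quantitative bounds fail on it.

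For the lower bound: the claim defines $t_\ell = \min_{\eta > 0} t_{\ell,\eta}$, so $\eta$ may be tuned to the fixed instance and initialization. In your scalar problem $\ell(w) = \tfrac{1}{p}(w-y^*)^p$ with initialization $w^{(0)} = w_0^2$, the choice $\eta = (y^* - w_0^2)^{2-p}$ lands \emph{exactly} on $y^*$ after one step, so $t_\ell = 1$ regardless of~$\epsilon$. Your divergence argument correctly caps the admissible~$\eta$, but it does not lower-bound the iteration count: right inside the stability window there is a step size that jumps straight to the minimizer. A genuine lower bound on $t_\ell$ needs at least two incompatible scales (e.g.\ $d_0 \geq 2$ or several samples) so that no single~$\eta$ can serve all directions simultaneously. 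For the upper bound: invoking an analogue of Theorem~\ref{theorem:lnn_converge} is invalid because that theorem relies on $\alpha$-strong convexity of~$\ell$, which the $\ell_p$ loss with $p>2$ lacks. In your setting the preconditioner $\P(w)=2|w|$ is essentially the constant $2y^*$ throughout the convergence phase, so there the end-to-end recursion is just gradient descent with effective step $2\zeta y^*$, and the residual decays only polynomially ($\delta^{(t)}\sim t^{-1/(p-2)}$), not linearly. Hence $t_\phi$ is also $\Theta(\epsilon^{-(p-2)/p})$ in this construction, and $t_\ell/t_\phi$ does not diverge as $\epsilon\to 0$. Note that the paper's sketch (which defers to \citet{arora2018optimization} for the actual construction) obtains the arbitrary speedup by varying the \emph{instance}---``depending on the location of~$W^*$''---rather than by sending $\epsilon\to 0$ on a fixed scalar problem; your plan of fixing $y^*$ and shrinking~$\epsilon$ is at odds with that mechanism.
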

\begin{proof}[Proof sketch]
Let~$W^*$ be a global minimizer of~$\ell ( \cdot )$, and restrict attention to cases where it is unique.
Since $\ell ( \cdot )$ is the $\ell_p$~loss (for a linear regression task) with $p > 2$, its landscape is steep away from~$W^*$, and flat in the vicinity of~$W^*$.
This means (disregarding degenerate cases where $W^*$ is close to the origin) that with near-zero initialization, in order to reach~$W^*$, optimization must initially descend a steep slope, and then traverse through a flat valley.
If optimization is via gradient descent, a small step size is necessary to avoid divergence at the outset, and this leads to slow movement after the landscape flattens.
On the other hand, the (discretized) end-to-end dynamics entail a preconditioner inducing a momentum effect (see text following Corollary~\ref{coro:e2e_dyn_vec}), thus when applied with proper step size they can carefully descend to the flat valley, gradually accelerating thereafter.
Depending on the location of~$W^*$, this can result in arbitrarily faster convergence.
\end{proof}

The takeaway from Claim~\ref{claim:lnn_acc} is that overparameterization with a linear neural network, \ie, insertion of depth via linear layers, can accelerate gradient descent, despite introducing non-convexity while yielding no gain in terms of expressiveness!
This phenomenon of \emph{implicit acceleration by overparameterization} goes beyond the specific cases theoretically demonstrated in Claim~\ref{claim:lnn_acc}.
Namely, it occurs empirically in various cases where $\ell ( \cdot )$ is the $\ell_p$~loss for a linear regression task with $p > 2$---see Figure~\ref{fig:lnn_acc}~(left) for an example.
Moreover, the phenomenon brings forth a practical technique for accelerating optimization of \emph{non-linear} neural networks through addition of linear layers, \ie, through replacement of internal linear transformations with linear neural networks.
This technique, demonstrated in Figure~\ref{fig:lnn_acc}~(right), was employed in various real-world settings (see, \eg,~\citet{bell2019blind,guo2020expandnets,cao2020conv,huh2021low}), and constitutes a practical application of linear neural networks born from a dynamical analysis.

\section{Generalization} \label{sec:gen}

Neural networks are able to generalize even when having much more trainable parameters (weights) than examples to train on.
The fact that this generalization can take place in the absence of any explicit regularization (see \citet{zhang2017understanding} for extensive empirical evidence) has led to a common view by which gradient-based optimization induces an \emph{implicit regularization}---a tendency to fit training examples with functions of low ``complexity.''
It is an ongoing effort to mathematically support this intuition.
The current section does so for linear neural networks.

Optimization of a linear neural network, \ie, minimization of the overparameterized objective~$\phi ( \cdot )$ (defined in Equation~\eqref{eq:oprm_obj}), ultimately produces an end-to-end matrix $W_{n : 1}$ (Definition~\ref{def:e2e}) designed to be a solution for the loss~$\ell ( \cdot )$.
The question we ask in this section is what kind of solution~$W_{n : 1}$ will be produced when $\phi ( \cdot )$ is minimized via gradient descent emanating from near-zero initialization.
This question is most meaningful when $\ell ( \cdot )$ is underdetermined, \ie, admits multiple global minimizers.
A prominent set of tasks giving rise to underdetermined loss functions is \emph{matrix sensing}, which includes linear regression as a special case.
We will focus on settings where $\ell ( \cdot )$ corresponds to a matrix sensing task.
Our treatment will rely on the dynamical analysis of Section~\ref{sec:dyn}.

\subsection{Matrix Sensing} \label{sec:gen:mat_sense}

A \emph{matrix sensing} task is defined by measurement matrices $A_1 , A_2 , \ldots$ $, A_m \in \R^{d_n \times d_0}$ and corresponding measurements \mbox{$b_1 , b_2 , \ldots , b_m \in \R$}.
Given these, the goal is to find a matrix $W \in \R^{d_n \times d_0}$ satisfying $\inprodlr{W}{A_i} := \tr ( W A_i^\top ) = b_i$ for $i = 1 , 2 , \ldots , m$.
A notable special case of matrix sensing, known as \emph{matrix completion}, is where each measurement matrix holds one in a single entry and zeros elsewhere.
Linear regression is also a special case of matrix sensing, as for any $\x \in \R^{d_0}$ and $\y \in \R^{d_n}$, the requirement $W \x = \y$ can be realized via $d_n$ measurement matrices $\e_1 \x^\top , \e_2 \x^\top , \ldots , \e_{d_n} \x^\top$ with corresponding measurements $y_1 , y_2 , \ldots , y_{d_n}$, where, for $i = 1 , 2 , \ldots , d_n$:
$\e_i \in \R^{d_n}$ stands for a vector holding one in its $i$th entry and zeros elsewhere; and 
$y_i \in \R$ represents the $i$th entry of~$\y$.

For tackling matrix sensing, it is common practice to consider the square loss over measurements.
In our context, this amounts to considering a training loss~$\ell ( \cdot )$ of the form:
\be
\ell ( W ) = \tfrac{1}{2 m} \sum\nolimits_{i = 1}^m ( \inprodlr{W}{A_i} - b_i )^2
\text{\,.}
\label{eq:loss_mat_sense}
\ee
If the number of measurements is smaller than the number of entries in the sought-after solution, \ie, if $m < d_0 d_n$, then (assuming the measurement matrices are linearly independent, which generically is the case) the loss~$\ell ( \cdot )$ is underdetermined---it admits infinitely many solutions attaining the global minimum $\ell^* := \inf_{W \in \R^{d_n \times d_0}} \ell ( W ) = 0$.
There is often interest in finding, among all these global minimizers, one whose rank is lowest, \ie, $W^* \in \argmin_{W \in \R^{d_n \times d_0} : \ell ( W ) = \ell^*} \rank ( W )$.
This is NP-hard in general.~%
However, it is known (see~\citet{recht2010guaranteed}) that if $\rank ( W^* )$ is sufficiently low compared to the number of measurements~$m$, and if the measurement matrices satisfy a certain technical condition (``restricted isometry property''), then it is possible to find $W^*$ by solving a convex (constrained) optimization program, namely:
\be
W^* = \argmin\nolimits_{W \in \R^{d_n \times d_0} : \ell ( W ) = \ell^*} \| W \|_*
\text{\,,}
\label{eq:nuc_norm_min}
\ee
where $\| \cdot \|_*$ stands for nuclear norm.\footnote{
The nuclear norm of a matrix is equal to the sum of its singular values.
}
Roughly speaking, this implies that in matrix sensing, given sufficiently many measurements, a global minimizer of lowest~rank can often be found via regularization based on nuclear norm.

\subsection{Implicit Regularization} \label{sec:gen:imp_reg}

Suppose we tackle matrix sensing with a linear neural network, meaning we minimize the overparameterized objective~$\phi ( \cdot )$ (Equation~\eqref{eq:oprm_obj}) induced by a loss~$\ell ( \cdot )$ as defined in Equation~\eqref{eq:loss_mat_sense}.
What kind of solution (for~$\ell ( \cdot )$) will the end-to-end matrix $W_{n : 1}$ reach?
If any of the hidden dimensions of the network (\ie, any of $d_1 , d_2 , \ldots , d_{n - 1}$) were small then $W_{n : 1}$ would be constrained to have low rank, but as stated in Section~\ref{sec:intro}, we consider the case where hidden dimensions are large enough to not restrict the search space (\ie, we assume $d_j \geq \min \{ d_0 , d_n \}$ for $j = 1 , 2 , \ldots , n - 1$).
Surprisingly, experiments show (see, \eg,~\citet{arora2019implicit}) that even in this case, gradient descent with small step size emanating from near-zero initialization tends to produce an end-to-end matrix of low rank.
This tendency is driven by implicit regularization, as there is nothing explicit in the minimized objective~$\phi ( \cdot )$ promoting low rank (indeed,~it typically admits global minimizers whose end-to-end matrices have~high~rank).

Our goal in the current section is to mathematically characterize the implicit regularization described above, \ie, the tendency of linear neural networks to produce a low rank solution (end-to-end matrix)~$W_{n : 1}$ when applied to a loss~$\ell ( \cdot )$ of the form in Equation~\eqref{eq:loss_mat_sense}.
An elegant supposition, formally stated below, is that the implicit regularization solves the convex optimization program in Equation~\eqref{eq:nuc_norm_min}, thus implements a method that under certain conditions provably finds a global minimizer of lowest rank.
\begin{supposition}
\label{supposition:imp_reg_nuc_norm}
If $W_{n : 1}$ converges to a global minimizer (of~$\ell ( \cdot )$), this global minimizer has lowest nuclear norm (among all global minimizers).
\end{supposition}
Supposition~\ref{supposition:imp_reg_nuc_norm} can be proven in scenarios where the measurement matrices $A_1 , A_2 , \ldots , A_m$ satisfy specific conditions (see~\citet{gunasekar2017implicit,li2018algorithmic,arora2019implicit,belabbas2020implicit}).
However, systematic experimentation suggests that it does not hold true in general---see Figure~\ref{fig:lnn_vs_nuc_norm} for an example.
In the next subsection we will employ the dynamical analysis of Section~\ref{sec:dyn} for showing that the implicit regularization of linear neural networks implements a greedy low rank learning process which cannot be characterized as lowering nuclear norm, or \emph{any} other norm.

\begin{figure}[t]
\centering
\includegraphics[width=\textwidth]{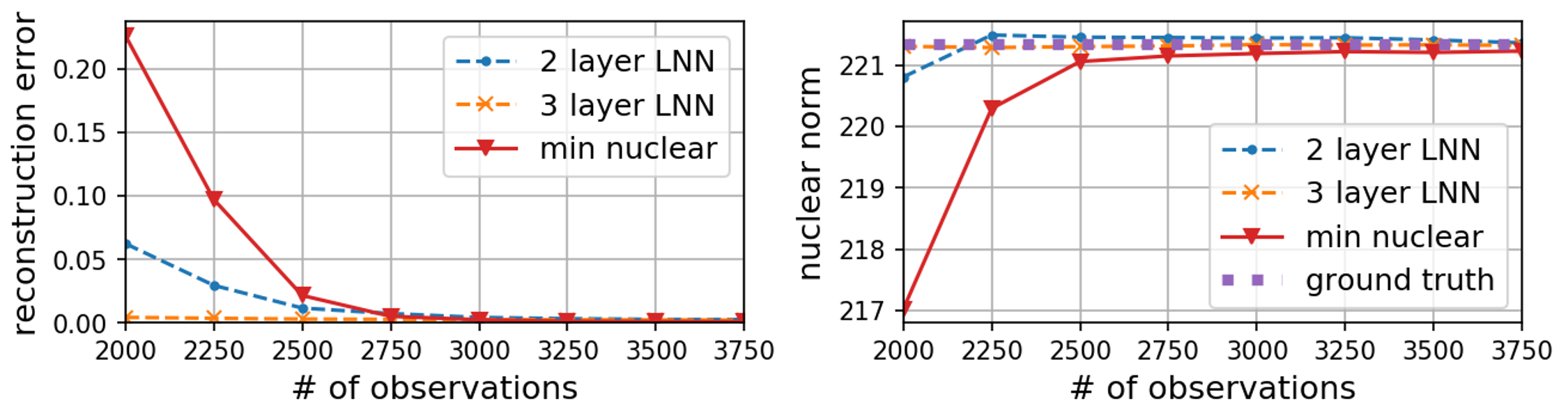}
\caption{
Experiment comparing, on matrix completion (special case of matrix sensing) tasks, the global minimizer of lowest nuclear norm (``min nuclear'') against solutions produced by two and three layer linear neural networks (``2~layer LNN'' and ``3~layer LNN,'' respectively).
Each task entails a different number of observations (measurements), taken from a low rank ground truth matrix.
Left and right plots respectively display reconstruction errors (distances from ground truth) and nuclear norms of the solutions on each task.
Notice that on tasks with many observations, the difference between ground truth and global minimizer of lowest nuclear norm is slight, and the linear neural networks converge to these.
In contrast, on tasks with few observations the difference is significant, and the linear neural networks (especially the deeper one) choose low rank ground truth over global minimizer of lowest nuclear norm.
For details see \citet{arora2019implicit}, from which results are taken.
}
\label{fig:lnn_vs_nuc_norm}
\end{figure}

\subsection{Greedy Low Rank Learning} \label{sec:gen:greedy}

Our analysis of implicit regularization in linear neural networks relies on the concept of analytic singular value decomposition, defined herein for completeness.
\begin{definition}
\label{def:asvd}
For a curve $W : [ 0 , t_e ) \to \R^{d \times d'}$, with $d , d' \in \N$ and $t_e \in \R_{> 0} \cup \{ \infty \}$, an \emph{analytic singular value decomposition} is a triplet $( U : [ 0 , t_e ) \to \R^{d \times \underline{d}} , S : [ 0 , t_e ) \to \R^{\underline{d} \times \underline{d}} , V : [ 0 , t_e ) \to \R^{d' \times \underline{d}} )$, where $\underline{d} := \min \{ d , d' \}$, the curves $U ( \cdot )$, $S ( \cdot )$ and~$V ( \cdot )$ are analytic,\footref{foot:analytic} and for every $t \in [ 0 , t_e )$:
the columns of~$U ( t )$ are orthonormal;
$S ( t )$ is diagonal;\footnote{%
Entries on the diagonal of~$S ( t )$ may be negative, and may appear in any order (in particular, they are generally not arranged in descending or ascending order).
}
the columns of~$V ( t )$ are orthonormal;
and
$W ( t ) = U ( t ) S ( t ) V^\top ( t )$.
\end{definition}
Proposition~\ref{prop:e2e_asvd} below states that the curve traversed by the end-to-end matrix during optimization, \ie, $W_{n : 1} ( \cdot )$, admits an analytic singular value decomposition.
\begin{proposition}
\label{prop:e2e_asvd}
There exists an analytic singular value decomposition for~$W_{n : 1} ( \cdot )$.
\end{proposition}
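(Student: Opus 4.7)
The plan is to split the argument into two parts: first establishing that $W_{n : 1}(\cdot)$ is a real analytic matrix-valued curve, and then invoking a classical existence theorem for analytic SVDs of such curves.

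For the first part, I would argue as follows. The loss $\ell(\cdot)$ is assumed analytic (see the opening of Section~\ref{sec:dyn}), so the overparameterized objective $\phi(\cdot)$, being a composition of $\ell(\cdot)$ with a polynomial map in $(W_1, \ldots, W_n)$, is analytic on $\R^{d_1 \times d_0} \times \cdots \times \R^{d_n \times d_{n - 1}}$. Its partial derivatives are therefore analytic as well, so the right-hand side of the gradient flow system~\eqref{eq:gf} is analytic in the weights. By the classical theorem on analytic dependence of ODE solutions on the time variable (a special case of the Cauchy--Kowalevski theorem, or equivalently the power series construction of ODE solutions with analytic vector field), each curve $W_j(\cdot)$ is analytic on its maximal interval of existence $[ 0 , t_e )$. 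Since products of analytic matrix-valued functions are analytic, $W_{n : 1}(\cdot) = W_n(\cdot) W_{n - 1}(\cdot) \cdots W_1(\cdot)$ is analytic on $[ 0 , t_e )$.

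For the second part, I would invoke the classical result (see, e.g., Bunse-Gerstner, Byers, Mehrmann and Nichols, 1991) that every real analytic matrix-valued function on a connected interval admits a real analytic singular value decomposition in the generalized sense of Definition~\ref{def:asvd}. Crucially, the admissibility of \emph{signed} entries on the diagonal of $S(t)$ and of an \emph{arbitrary} order for these entries is exactly what makes the conclusion hold without exception: at times $t$ where two ordinary (nonnegative, ordered) singular values coalesce or cross, the usual SVD must swap indices or change signs, destroying analyticity; the relaxed notion absorbs these discontinuities.

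The main obstacle I anticipate is not the analyticity of $W_{n : 1}(\cdot)$, which follows from routine ODE theory, but the existence of the analytic SVD itself. If one preferred a self-contained derivation rather than a citation, the natural route is through Rellich's theorem on analytic eigendecompositions of analytic symmetric matrix-valued curves: apply it to the analytic curve $W_{n : 1}^\top(\cdot) W_{n : 1}(\cdot)$ to obtain analytic orthonormal right singular vectors as the columns of $V(\cdot)$ together with analytic nonnegative squared singular values; take analytic (possibly signed) square roots to form $S(\cdot)$; and then define $U(\cdot)$ column-wise so that $W_{n : 1}(t) V(t) = U(t) S(t)$, taking care, at isolated zeros of singular values, to extend $U(\cdot)$ analytically by borrowing directions from the kernel of $W_{n : 1}(t)$ and permitting sign flips in $S(\cdot)$ to preserve analyticity across such points.
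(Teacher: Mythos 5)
Your proposal is correct and follows essentially the same route as the paper: establish analyticity of $W_{n:1}(\cdot)$ by noting that $\phi(\cdot)$ is analytic, that gradient flow over an analytic objective yields analytic weight curves, and that products of analytic curves are analytic; then invoke the existence theorem of Bunse-Gerstner et al.\ (1991) for analytic SVDs of analytic matrix curves (the paper cites Ilyashenko's lecture notes rather than Cauchy--Kowalevski for the analyticity of ODE solutions, but this is the same underlying fact). Your additional sketch via Rellich's theorem is a reasonable way to make the SVD existence step self-contained, but it is not needed given the citation.
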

\begin{proof}
Any analytic curve in matrix space admits an analytic singular value decomposition (see Theorem~1 in~\citet{bunse1991numerical}), so it suffices to show that $W_{n : 1} ( \cdot )$ is~analytic.
Analytic functions are closed under summation, multiplication and composition, so the analyticity of the loss~$\ell ( \cdot )$ implies that the overparameterized objective~$\phi ( \cdot )$ (Equation~\eqref{eq:oprm_obj}) is analytic as well.
A basic result in the theory of analytic differential equations is that gradient flow over an analytic objective yields an analytic curve (see Theorem~1.1 in~\citet{Ilyashenko2008lectures}).
We conclude that $W_1 ( \cdot ) , W_2 ( \cdot ) , \ldots , W_n ( \cdot )$ (curves traversed by the weight matrices during optimization; see Equation~\eqref{eq:gf}) are analytic.
Since $W_{n : 1} ( \cdot ) := W_n ( \cdot ) W_{n - 1} ( \cdot ) \cdots W_1 ( \cdot )$, and, as stated, analytic functions are closed under summation and multiplication, $W_{n : 1} ( \cdot )$~is also analytic.
This concludes the proof.
\end{proof}
Let $( U ( \cdot ) , S ( \cdot ) , V ( \cdot ) )$ be an analytic singular value decomposition for~$W_{n : 1} ( \cdot )$.
For $r = 1 , 2 , \ldots , \min \{ d_0 , d_n \}$, denote by $\uu_r ( \cdot )$ the $r$th column of~$U ( \cdot )$, by $\sigma_r ( \cdot )$ the $r$th diagonal entry of~$S ( \cdot )$, and by $\vv_r ( \cdot )$ the $r$th column of~$V ( \cdot )$.
Up to potential minus signs, $( \sigma_r ( \cdot ) )_{r = 1}^{\min \{ d_0 , d_n \}}$ are the singular values of~$W_{n : 1} ( \cdot )$, with corresponding left and right singular vectors $( \uu_r ( \cdot ) )_{r = 1}^{\min \{ d_0 , d_n \}}$ and $( \vv_r ( \cdot ) )_{r = 1}^{\min \{ d_0 , d_n \}}$, respectively.\footnote{%
More precisely, for every $t \geq 0$, the singular values of~$W_{n : 1} ( t )$ are $( | \sigma_r ( t ) | )_{r = 1}^{\min \{ d_0 , d_n \}}$, and as left and right singular vectors we may take $( \uu_r ( t ) )_{r = 1}^{\min \{ d_0 , d_n \}}$ and $( s_r \vv_r ( t ) )_{r = 1}^{\min \{ d_0 , d_n \}}$ respectively, where $s_r$ equals~$1$ if $\sigma_r ( t ) \geq 0$ and $-1$ otherwise.
}
The following theorem employs the end-to-end dynamics from Section~\ref{sec:dyn} (Theorem~\ref{theorem:e2e_dyn}) for characterizing the dynamics of~$\sigma_r ( \cdot )$.
\begin{theorem}
\label{theorem:sing_dyn}
It holds that:
\[
\dot{\sigma}_r ( t ) := \tfrac{d}{dt} \sigma_r ( t ) = \big( \sigma_r^2 ( t ) \big)^{1 - 1 / n} \inprodlr{- \nabla \ell \big( W_{n : 1} ( t ) \big)}{\uu_r ( t ) \vv_r^\top ( t )} n
~ , ~~ t > 0 ~ , ~ r = 1 , 2 , \ldots , \min \{ d_0 , d_n \}
\text{\,,}
\]
where as before, $\inprodlr{\cdot \,}{\cdot}$~stands for the standard inner product between matrices, \ie, $\inprodlr{A}{B} :=  \tr ( A B^\top )$ for any matrices $A$ and~$B$ of the same dimensions.
\end{theorem}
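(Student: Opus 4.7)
The plan is to derive the dynamics of the diagonal entries of $S(\cdot)$ by differentiating the analytic singular value decomposition $W_{n:1}(t)=U(t)S(t)V^\top(t)$, then projecting onto the $r$-th singular direction and invoking the end-to-end dynamics from Theorem~\ref{theorem:e2e_dyn}.

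\textbf{Step 1: Isolate $\dot{\sigma}_r$.} I would start by applying the product rule to $W_{n:1}(t)=U(t)S(t)V^\top(t)$, and then left-multiply by $U^\top(t)$ and right-multiply by $V(t)$. Because $U(\cdot)$ has orthonormal columns, $U^\top U=I_{\underline{d}}$; differentiating yields $U^\top\dot U+\dot U^\top U=0$, so $U^\top\dot U$ is skew-symmetric and in particular has zero diagonal. The same reasoning applies to $V^\top\dot V$. Reading off the $(r,r)$ entry of $U^\top\dot{W}_{n:1}V=U^\top\dot U\,S+\dot S+S\,\dot V^\top V$, the two off-terms give $\sigma_r(t)(U^\top\dot U)_{rr}$ and $\sigma_r(t)(\dot V^\top V)_{rr}$, both of which vanish by skew-symmetry. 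Hence
\[
\dot{\sigma}_r(t)=\uu_r^\top(t)\,\dot{W}_{n:1}(t)\,\vv_r(t).
\]

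\textbf{Step 2: Substitute the end-to-end dynamics.} Plugging in Equation~\eqref{eq:e2e_dyn},
\[
\dot{\sigma}_r(t)=-\sum_{j=1}^n \uu_r^\top(t)\bigl[W_{n:1}(t)W_{n:1}^\top(t)\bigr]^{\frac{j-1}{n}}\nabla\ell\bigl(W_{n:1}(t)\bigr)\bigl[W_{n:1}^\top(t)W_{n:1}(t)\bigr]^{\frac{n-j}{n}}\vv_r(t).
\]
Since $W_{n:1}W_{n:1}^\top=US S^\top U^\top$ and $W_{n:1}^\top W_{n:1}=VS^\top S V^\top$, the columns $\uu_r$ and $\vv_r$ are eigenvectors of these positive semidefinite matrices with common eigenvalue $\sigma_r^2(t)$. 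Applying the power operator gives $\bigl[W_{n:1}W_{n:1}^\top\bigr]^{\frac{j-1}{n}}\uu_r=(\sigma_r^2)^{\frac{j-1}{n}}\uu_r$ and $\bigl[W_{n:1}^\top W_{n:1}\bigr]^{\frac{n-j}{n}}\vv_r=(\sigma_r^2)^{\frac{n-j}{n}}\vv_r$, so each summand collapses to $(\sigma_r^2(t))^{\frac{n-1}{n}}\,\uu_r^\top(t)\nabla\ell(W_{n:1}(t))\vv_r(t)$. Summing the $n$ identical contributions and rewriting $\uu_r^\top(t)\nabla\ell(W_{n:1}(t))\vv_r(t)=\inprodlr{\nabla\ell(W_{n:1}(t))}{\uu_r(t)\vv_r^\top(t)}$ (via the trace definition of the inner product) yields the claimed expression.

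\textbf{Anticipated obstacle.} The only subtlety is the bookkeeping for $U(\cdot)$ and $V(\cdot)$ being rectangular (of shape $d\times\underline{d}$ and $d'\times\underline{d}$ respectively), since the diagonal-killing argument uses $U^\top U=I_{\underline{d}}$ rather than $UU^\top=I$. This works cleanly: $U^\top\dot U$ is a $\underline{d}\times\underline{d}$ skew-symmetric matrix, hence vanishes on the diagonal. A second minor point is that $\sigma_r(t)$ can be negative in an analytic SVD, but the power operator acts on the positive semidefinite matrices $W_{n:1}W_{n:1}^\top$ and $W_{n:1}^\top W_{n:1}$ whose eigenvalues are $\sigma_r^2(t)\ge 0$, so $(\sigma_r^2)^{1-1/n}$ is always well defined and the derivation goes through for all $t>0$ without sign caveats.
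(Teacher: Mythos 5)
Your proposal is correct and follows essentially the same route as the paper's proof: differentiate the analytic SVD, project onto the $r$-th singular pair (killing the $U^\top\dot U$ and $\dot V^\top V$ contributions via orthonormality of the columns — the paper uses $\uu_r^\top\dot\uu_r=\tfrac{1}{2}\tfrac{d}{dt}(\uu_r^\top\uu_r)=0$ directly, which is the diagonal case of your skew-symmetry observation), substitute the end-to-end dynamics, and collapse the matrix powers using the eigenvector relations. No gaps; your closing remarks on rectangular $U,V$ and possibly negative $\sigma_r(t)$ are accurate and match the paper's treatment.
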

\begin{proof}
Differentiating the analytic singular value decomposition of $W_{n : 1} ( \cdot )$ with respect to time yields:
\[
\dot{W}_{n : 1}(t) = \dot{U}(t)S(t)V^{\top}(t) +U(t)\dot{S}(t)V^{\top}(t)+U(t)S(t)\dot{V}^{\top}(t) 
~ , ~~ t > 0
\text{\,,}
\]
where $\dot{U} ( t ) := \tfrac{d}{dt} U ( t )$, $\dot{S} ( t ) := \tfrac{d}{dt} S ( t )$ and $\dot{V} ( t ) := \tfrac{d}{dt} V ( t )$.
Multiplying from the left by $U^{\top}(t)$ and from the right by $V(t)$, while using the fact that the columns of~$U ( t )$ are orthonormal and the columns of~$V ( t )$ are orthonormal, we obtain:
\[
U^{\top}(t)\dot{W}_{n : 1}(t)V(t)=U^{\top}(t)\dot{U}(t)S(t)+\dot{S}(t)+S(t)\dot{V}^{\top}(t)V(t)
~ , ~~ t > 0
\text{\,.}
\]
Fix some $r \in \{ 1, 2, \ldots, \min \{ d_0 , d_n \} \}$.
The $r$th diagonal entry of the latter matrix equation is:
\be
\uu_r^{\top}(t)\dot{W}_{n : 1}(t)\vv_r(t)= \uu_r^\top(t) \dot{\uu}_r(t) \cdot \sigma_{r}(t) +\dot{\sigma}_{r}(t)+\sigma_{r}(t) \cdot \dot{\vv}_r^\top (t) \vv_r(t)
~ , ~~ t > 0
\text{\,,}
\label{eq:asvd_deriv_rth_entry}
\ee
where $\dot{\uu}_r(t) := \tfrac{d}{dt} \uu_r(t)$ and $\dot{\vv}_r(t) := \tfrac{d}{dt} \vv_r(t)$.
Since $\uu_r ( \cdot )$ is a column of~$U ( \cdot )$ it has unit length throughout, meaning $\uu_r^\top ( \cdot ) \uu_r ( \cdot ) \equiv 1$.
Similarly, $\vv_r ( \cdot )$ is a column of~$V ( \cdot )$ and therefore $\vv_r^\top ( \cdot ) \vv_r ( \cdot ) \equiv 1$.
This implies that for every $t > 0$:
\[
\uu_r^\top(t) \dot{\uu}_r(t) = \tfrac{1}{2} \tfrac{d}{dt} \big( \uu_r^\top ( t ) \uu_r ( t ) \big) = 0
\text{~~~and~~~}
\dot{\vv}_r^\top(t) \vv_r(t) = \tfrac{1}{2} \tfrac{d}{dt} \big( \vv_r^\top ( t ) \vv_r ( t ) \big) = 0
\text{\,.}
\]
Equation~\eqref{eq:asvd_deriv_rth_entry} thus simplifies to:
\[
\dot{\sigma}_{r}(t) = \uu_r^{\top}(t) \dot{W}_{n : 1}(t) \vv_r(t)
~ , ~~ t > 0
\text{\,.}
\]
Plugging in the end-to-end dynamics (Equation~\eqref{eq:e2e_dyn}), we have:
\be
\dot{\sigma}_{r}(t) = - \sum\nolimits_{j = 1}^n \uu_r^{\top}(t) \left[ W_{n : 1} ( t ) W_{n : 1}^\top ( t ) \right]^\frac{j - 1}{n} \nabla \ell \big( W_{n : 1} ( t ) \big) \left[ W_{n : 1}^\top ( t ) W_{n : 1} ( t ) \right]^\frac{n - j}{n} \vv_r(t)
~ , ~~ t > 0
\text{\,.}
\label{eq:asvd_deriv_rth_entry_e2e_dyn}
\ee
For every $t > 0$ and $j \in \{1, 2, \ldots , n \}$:
\be
\uu_{r}^{\top} (t) \left[ W_{n : 1} ( t ) W_{n : 1}^\top ( t ) \right]^\frac{j - 1}{n} = \uu_{r}^{\top} (t) U ( t ) \left[ S^2 ( t ) \right]^\frac{j - 1}{n} U^\top ( t ) = \big( \sigma_r^2 ( t ) \big)^{ \frac{j - 1}{n} } \cdot \uu_{r}^{\top} (t)
\label{eq:asvd_deriv_rth_entry_e2e_dyn_left_simplify}
\ee
and 
\be
\left[ W_{n : 1}^\top ( t ) W_{n : 1} ( t ) \right]^\frac{n - j}{n} \vv_r (t) = V ( t ) \left[ S^2 ( t ) \right]^\frac{n - j}{n} V^\top ( t ) \vv_r ( t ) = \big( \sigma_r^2 ( t ) \big)^{ \frac{n - j}{n} } \cdot \vv_{r} (t) 
\text{\,.}
\label{eq:asvd_deriv_rth_entry_e2e_dyn_right_simplify}
\ee
Plugging Equations \eqref{eq:asvd_deriv_rth_entry_e2e_dyn_left_simplify} and~\eqref{eq:asvd_deriv_rth_entry_e2e_dyn_right_simplify} into Equation~\eqref{eq:asvd_deriv_rth_entry_e2e_dyn} concludes the proof:
\[
\begin{split}
\dot{\sigma}_{r}(t) &= - \sum\nolimits_{j = 1}^n \big( \sigma_r^2 ( t ) \big)^{1 - 1 / n } \cdot \uu_{r}^{\top} (t) \nabla \ell \big( W_{n : 1} ( t ) \big) \vv_{r} (t) \\
& = \big( \sigma_r^2 ( t ) \big)^{1 - 1 / n} \inprodlr{- \nabla \ell \big( W_{n : 1} ( t ) \big)}{\uu_r ( t ) \vv_r^\top ( t )} n
\end{split}
~~ , ~~ t > 0
\text{\,.}
\]
\end{proof}

An immediate implication of Theorem~\ref{theorem:sing_dyn} is that in the case where $W_{n : 1} ( \cdot )$ is square (\ie, where $d_0 = d_n$), its determinant does not change sign.
\begin{lemma}
\label{lemma:det_sign}
Assume that $d_0 = d_n$.
Then, the sign of~$\det \big( W_{n : 1} ( \cdot ) \big)$ is constant through time.
That is, one of the following holds:
\emph{(i)}~$\det \big( W_{n : 1} ( t ) \big) \,{>}\, 0$ for all $t \,{\geq}\, 0$;
\emph{(ii)}~$\det \big( W_{n : 1} ( t ) \big) \,{=}\, 0$ for all $t \,{\geq}\, 0$;
or
\emph{(iii)}~$\det \big( W_{n : 1} ( t ) \big) \,{<}\, 0$ for all $t \,{\geq}\, 0$.
\end{lemma}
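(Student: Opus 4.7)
The plan is to reduce the statement to a per-singular-value dichotomy: for every $r$, the curve $\sigma_r(\cdot)$ from the analytic singular value decomposition of $W_{n : 1}(\cdot)$ is either identically zero on $[0, \infty)$ or never vanishes there. Assuming this dichotomy, the lemma follows quickly: invoking Proposition~\ref{prop:e2e_asvd}, write $W_{n : 1}(t) = U(t) S(t) V^\top(t)$; in the square case $d_0 = d_n$, both $U(\cdot)$ and $V(\cdot)$ are analytic curves of orthogonal matrices, so $\det U(\cdot)$ and $\det V(\cdot)$ are $\{-1, +1\}$-valued analytic (hence constant) functions. Therefore
\[
\det(W_{n : 1}(t)) = \epsilon \prod\nolimits_{r = 1}^{d_0} \sigma_r(t), \quad t \geq 0,
\]
for some fixed $\epsilon \in \{-1, +1\}$. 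If some $\sigma_r$ vanishes identically, the product is identically zero and we are in case~(ii); otherwise each $\sigma_r$ is continuous and nowhere zero on $[0, \infty)$, hence keeps a fixed sign, and so does their product, landing us in case~(i) or~(iii).

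To establish the dichotomy I would use Theorem~\ref{theorem:sing_dyn}, which states
\[
\dot \sigma_r(t) = h_r(t) \cdot (\sigma_r^2(t))^{(n-1)/n}, \quad t > 0,
\]
with $h_r(t) := n \inprodlr{-\nabla \ell(W_{n : 1}(t))}{\uu_r(t) \vv_r^\top(t)}$. Treating $h_r(\cdot)$ as a prescribed (continuous, in fact analytic) function along the given trajectory, this is a scalar ODE in $\sigma_r$. Since $2(n-1)/n \geq 1$ for every $n \geq 2$, the map $x \mapsto (x^2)^{(n-1)/n} = |x|^{2(n-1)/n}$ is locally Lipschitz (globally $1$-Lipschitz when $n = 2$, and $C^1$ when $n \geq 3$), so the right hand side is locally Lipschitz in $\sigma_r$ and standard ODE uniqueness applies. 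The zero function is clearly a solution. Thus, should $\sigma_r(t^*) = 0$ at some $t^* \geq 0$, uniqueness forces $\sigma_r$ to vanish on a neighborhood of $t^*$, and the analyticity of $\sigma_r(\cdot)$ inherited from the analytic SVD then propagates this local vanishing to $\sigma_r \equiv 0$ on $[0, \infty)$.

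The only subtlety I foresee is the legitimacy of regarding $h_r$ as a prescribed function of $t$ in spite of its \emph{a priori} dependence on the full network state. The resolution is that the lemma is a statement about an already existing gradient-flow trajectory, along which $h_r$ becomes a fixed analytic function of time; the scalar-ODE picture is then internally consistent and Picard--Lindel\"of uniqueness applies exactly as needed.
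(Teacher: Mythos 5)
Your argument is correct and follows essentially the same route as the paper's proof: both reduce the claim, via the analytic singular value decomposition and Theorem~\ref{theorem:sing_dyn}, to the dichotomy that each $\sigma_r ( \cdot )$ is either identically zero or nowhere zero on $[ 0 , \infty )$. The only difference is in how that dichotomy is established~---~the paper exhibits the explicit solutions of the separable scalar ODE for $n = 2$ and $n \geq 3$, whereas you invoke Picard--Lindel\"of uniqueness (valid since $x \mapsto | x |^{2 ( n - 1 ) / n}$ is locally Lipschitz for every $n \geq 2$) together with analyticity of $\sigma_r ( \cdot )$; both are sound.
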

\begin{proof}
Since \smash{$\abs{\det \big( W_{n : 1} ( \cdot ) \big)} \,{=}\, \prod_{r = 1}^{d_0} \abs{ \sigma_r ( \cdot ) }$}, by the continuity of $\det \big( W_{n : 1} ( \cdot ) \big )$ and the intermediate value theorem, it suffices to show that for each $r \in \{1, 2, \ldots, d_0 \}$, either $\sigma_r ( t ) \neq 0$ for all $t \geq 0$ or $\sigma_r ( t ) = 0$ for all $t \geq 0$.

Fix some $r \in \{1, 2, \ldots, d_0 \}$, and let $g ( \cdot )$ be the function defined by:
\[
g ( t ) := \inprod{-\nabla \ell \big( W_{n : 1} ( t ) \big)}{\uu_r ( t ) \vv_r^\top ( t )} n
~ , ~~ t \geq 0
\text{\,.}
\]
By Theorem~\ref{theorem:sing_dyn}, it holds that $\dot{\sigma}_r ( t ) = \big( \sigma_r^2 ( t ) \big)^{1 - 1 / n} g ( t )$ for all $t > 0$.
It can be verified via differentiation that the solution of this differential equation is, when $n = 2$:
\[
\sigma_r ( t )= 
\begin{cases}
\sigma_r ( 0 ) \exp \big( \int\nolimits_{0}^{t} g(t') \,dt' \big) &\text{if}~\sigma_r ( 0 ) > 0 \text{\,,} \\[0.5mm]
\sigma_r ( 0 ) \exp \big( - \int\nolimits_{0}^{t} g(t') \,dt' \big) &\text{if}~\sigma_r ( 0 ) < 0 \text{\,,} \\[0.5mm]
0 &\text{otherwise\,,}	 
\end{cases}
\qquad t \geq 0
\text{\,,}
\]
and when $n \geq 3$:
\[
\sigma_r ( t ) = 
\begin{cases}
\Big( \big( \sigma_r ( 0 ) \big)^{2 / n -1} + (2 / n -1)\int\nolimits_{0}^{t} g(t') dt' \Big)^{\frac{1}{ 2 / n -1}} &\text{if}~\sigma_r ( 0 ) > 0 \text{\,,} \\
- \Big( \big(- \sigma_r ( 0 ) \big)^{2 / n -1} - (2 / n -1)\int\nolimits_{0}^{t} g(t') dt' \Big)^{ \frac{1}{2 / n -1} } &\text{if}~\sigma_r ( 0 ) < 0 \text{\,,} \\[2mm]
0 &\text{otherwise\,,}
\end{cases}
\qquad t \geq 0
\text{\,.}
\]
Whether $n = 2$ or $n \geq 3$, it holds that either $\sigma_r ( t ) \neq 0$ for all $t \geq 0$ or $\sigma_r ( t ) = 0$ for all $t \geq 0$.
This concludes the proof.
\end{proof}
Lemma~\ref{lemma:det_sign} has a far-reaching consequence: there exist cases where $\ell ( \cdot )$ corresponds to a matrix sensing task (\ie, is of the form in Equation~\eqref{eq:loss_mat_sense}), and its minimization with a linear neural network leads \emph{all} norms of~$W_{n : 1} ( \cdot )$ to \emph{grow towards infinity}.
This result---formally delivered by Proposition~\ref{prop:imp_reg_no_norm} below---contrasts Supposition~\ref{supposition:imp_reg_nuc_norm} in that it means the implicit regularization of linear neural networks in matrix sensing cannot be characterized as lowering \emph{any} norm, in particular the nuclear norm.
\begin{proposition}
\label{prop:imp_reg_no_norm}
Assume that $d_0 = d_n \geq 2$.
Then, there exist measurement matrices $A_1 , A_2 , \ldots , A_m \in \R^{d_n \times d_0}$ and corresponding measurements $b_1 , b_2 , \ldots , b_m \in \R$ such that the loss $\ell ( \cdot )$ defined by Equation~\eqref{eq:loss_mat_sense} admits solutions attaining the~global~minimum $\ell^* := \inf_{W \in \R^{d_n \times d_0}} \ell ( W ) = 0$, and yet the following holds.
For~any~norm $\| \cdot \|$ over~$\R^{d_n \times d_0}$, there exist constants $c \in \R_{> 0}$ and $c' \in \R$ such that if $\det \big( W_{n : 1} ( 0 ) \big) > 0$, \ie, if the determinant of the end-to-end matrix is positive at initialization,\footnote{%
The condition $\det \big( W_{n : 1} ( 0 ) \big) > 0$ can be replaced by $\det \big( W_{n : 1} ( 0 ) \big) < 0$ (see Exercise~\ref{exercise:imp_reg_no_norm_neg_det}).
}
then:
\[
\| W_{n : 1} ( t ) \| \geq c \big( \ell \big( W_{n : 1} ( t ) \big) - \ell^* \big)^{- 1 / 2} + c'
~ , ~
t \geq 0
\text{\,,}
\]
meaning in particular that $\| W_{n : 1} ( \cdot ) \|$ diverges to infinity when the loss approaches global minimum (\ie, when $\ell \big( W_{n : 1} ( \cdot ) \big)$ converges to~$\ell^*$).
\end{proposition}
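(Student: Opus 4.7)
The plan is to construct a matrix sensing instance in which every exact solution has determinant exactly $-1$, so that Lemma~\ref{lemma:det_sign} traps the end-to-end matrix (which starts with positive determinant) away from the solution set for all time, and then to show quantitatively that any positive-determinant matrix with small loss must have large Frobenius norm.

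For the construction, I would first treat $d := d_0 = d_n = 2$ by taking three measurements that pin $W_{11} = 1$, $W_{12} = 0$, and $W_{22} = -1$ (explicitly, $A_1 = \e_1 \e_1^\top$ with $b_1 = 1$, $A_2 = \e_2 \e_2^\top$ with $b_2 = -1$, $A_3 = \e_1 \e_2^\top$ with $b_3 = 0$), leaving $W_{21}$ free; every solution then satisfies $\det(W) = -1$. For $d \geq 3$, I would add further measurements pinning the remaining entries of the first two rows and columns to zero and the trailing $(d - 2) \times (d - 2)$ principal block to the identity, so that the solution set consists of block-structured matrices all having $\det = -1$. In either case finite solutions exist, so $\ell^* = 0$ is attained. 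Since $\det(W_{n : 1}(0)) > 0$, Lemma~\ref{lemma:det_sign} gives $\det(W_{n : 1}(t)) > 0$ for all $t \geq 0$, so $W_{n : 1}(t)$ never coincides with a solution.

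The quantitative core reduces to showing that every $W \in \R^{d \times d}$ with $\det(W) > 0$ satisfies $\| W \|_F \geq c \, \ell(W)^{-1/2} + c'$ for some fixed $c > 0$ and $c' \in \R$. In the $d = 2$ case, write $W_{11} = 1 + a$, $W_{12} = b$, $W_{22} = -1 + d$, so that $\ell(W) = \tfrac{1}{6}(a^2 + b^2 + d^2)$ and hence $|a|, |b|, |d| \leq \sqrt{6 \, \ell(W)}$. The constraint $\det(W) > 0$ rearranges to $- b \, W_{21} > 1 - d + a - a d$, whose right hand side exceeds $1/2$ whenever $|a|, |d| \leq 1/4$, i.e.~whenever $\ell(W) \leq 1/96$. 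In that regime, $|W_{21}| \geq 1/(2 |b|) \geq 1/(2 \sqrt{6 \, \ell(W)})$, so $\| W \|_F \geq | W_{21} | \geq c_1 \, \ell(W)^{-1/2}$ with $c_1 := 1/(2 \sqrt{6})$. For $\ell(W) > 1/96$ the bound $\| W \|_F \geq c_1 \, \ell(W)^{-1/2} + c'$ becomes trivial upon choosing $c'$ sufficiently negative (using $\| W \|_F \geq 0$). In dimension $d \geq 3$, applying the same estimate to the upper-left $2 \times 2$ sub-block, whose deviations are again bounded by $\sqrt{6 \, \ell(W)}$, yields the same rate. Norm equivalence on the finite-dimensional space $\R^{d_n \times d_0}$ then transfers the estimate to any norm with adjusted constants.

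The main obstacle is the quantitative determinant-perturbation step. The subtlety is that the affine solution set is \emph{unbounded} along the free coordinate $W_{21}$, so small loss does not bound $W$ itself; however, it does force the three constrained entries to lie within $O(\sqrt{\ell(W)})$ of their targets, and the strict sign condition $\det(W) > 0$ then compels the free entry to grow inversely in $|b|$. Assembling the threshold split between small and large loss, the reduction to a $2 \times 2$ sub-block in higher dimensions, and the norm-equivalence extension are the remaining bookkeeping tasks.
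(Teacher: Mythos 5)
Your proposal is correct and follows essentially the same route as the paper's proof: a hand-crafted sensing instance whose entire solution set has determinant $-1$, the sign-conservation Lemma~\ref{lemma:det_sign} to trap the positive-determinant trajectory, a quantitative argument showing that positive determinant plus loss $\ell$ forces the one unconstrained entry to be of order $\ell^{-1/2}$, and an extension to arbitrary norms (the paper pins $w_{12}=w_{21}=1$, $w_{22}=0$ leaving $w_{11}$ free, but this is the same construction up to relabeling). One small arithmetic slip: with $|a|,|d|\leq 1/4$ the quantity $1-d+a-ad$ is only guaranteed to be at least $7/16$, not $1/2$, but this changes nothing beyond the value of the constant $c$.
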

\begin{proof}
For simplicity of presentation, assume that $d_0 = 2$.
The proof below can easily be extended to account for arbitrary $d_0 \geq 2$ (see Appendix~B in~\citet{razin2020implicit}).

Consider the measurement matrices $A_1 =  \e_1 \e_2^\top$, $A_2 = \e_2 \e_1^\top$ and $A_3 = \e_2 \e_2^\top$, where $\e_1 := ( 1 , 0 )^\top \in \R^2$ and $\e_2 := ( 0 , 1 )^\top \in \R^2$, with corresponding measurements $b_1 = 1$, $b_2 = 1$ and $b_3 = 0$.
The loss~$\ell ( \cdot )$ in this case can be written as:
\be
\ell (W) = \frac{1}{6} \left [ (w_{1, 2} - 1)^2 + (w_{2, 1} - 1)^2 + w_{2, 2}^2 \right ]
\label{eq:loss_imp_reg_no_norm}
\text{\,,}
\ee
where $w_{i, j}$ denotes the $(i, j)$th entry of $W \in \R^{2 \times 2}$, for $i = 1,2$ and $j = 1, 2$.
Clearly, any $W \in \R^{2 \times 2}$ holding ones in its off-diagonal and zero in its bottom-right entry attains the global minimum $\ell^* = 0$.

Fix some time $t \geq 0$, and for the moment, assume that $\ell ( W_{n : 1} (t) ) < 1 / 6$ (this assumption will later be lifted).
We slightly overload notation by using $w_{i, j} (t)$ to denote the $(i, j)$th entry of~$W_{n : 1} (t)$, for $i = 1, 2$ and $j = 1, 2$.
By Equation~\eqref{eq:loss_imp_reg_no_norm}:
\be
\abs{ w_{1,2} (t) - 1 } \leq \sqrt{6 \ell ( W_{n : 1} (t) ) } ~~~,~~~ \abs{ w_{2,1} (t) - 1 } \leq \sqrt{6 \ell ( W_{n : 1} (t) ) } ~~~,~~~ \abs{ w_{2, 2} (t) } \leq \sqrt{ 6 \ell ( W_{n : 1} (t) ) } 
\text{\,.}
\label{eq:seen_entries_bounds}
\ee
Lemma~\ref{lemma:det_sign} states that the determinant of $W_{n : 1} ( \cdot )$ does not change sign during optimization.
This implies that if $\det  ( W_{n : 1} ( 0 ) ) > 0$ then:
\be
\det \big(W_{n : 1} ( t ) \big ) = w_{1,1} (t) \cdot w_{2,2} (t) -  w_{1, 2} (t) \cdot w_{2,1} (t) > 0
\text{\,.}
\label{eq:det_greater_than_zero}
\ee
Equation~\eqref{eq:seen_entries_bounds}, along with our (temporary) assumption $\ell ( W_{n : 1} (t) ) < 1 / 6$, ensures that $w_{1, 2} (t) > 0$ and $w_{2, 1} (t) > 0$.
Thus, for Equation~\eqref{eq:det_greater_than_zero} to hold, necessarily $w_{1,1} (t) \cdot w_{2, 2} (t) > 0$.
We may therefore write $\abs{ w_{1,1} (t) \cdot w_{2,2} (t) } = w_{1,1} (t) \cdot w_{2,2} (t) > w_{1,2} (t) \cdot w_{2,1} (t)$.
Dividing by~$\abs{ w_{2, 2} (t) }$, while applying the bounds from Equation~\eqref{eq:seen_entries_bounds}, leads to:
\[
\abs{ w_{1, 1} (t) } > \frac{ \brk1{ 1 - \sqrt{ 6 \ell( W_{n : 1} (t) ) } }^2 }{ \sqrt{ 6 \ell( W_{n : 1} (t) ) } } = \frac{ 1 }{ \sqrt{ 6 \ell( W_{n : 1} (t) ) } } - 2 + \sqrt{ 6 \ell( W_{n : 1} (t) ) } \geq \frac{ 1 }{ \sqrt{ 6 \ell( W_{n : 1} (t) ) } } - 2
\text{\,.}
\]
It remains to convert this lower bound on $\abs{ w_{1, 1} (t) }$ to a lower bound on $\norm{ W_{n : 1} (t) }$.
Since $\| \cdot \|$ upholds the triangle inequality and absolute homogeneity:
\[
\norm{ W_{n : 1} (t) } \geq \abs{ w_{1,1} (t) } \cdot \norm{ \e_1 \e_1^\top } - \norm{ W_{n : 1} (t) - w_{1,1} (t) \cdot \e_1 \e_1^\top} 
\text{\,.}
\]
Using again the triangle inequality and absolute homogeneity, with subsequent application of Equation~\eqref{eq:seen_entries_bounds}, we bound the latter term on the right-hand side above:
\[
\begin{split}
\norm{ W_{n : 1} (t) - w_{1,1} (t) \cdot \e_1 \e_1^\top} & \leq \big ( 1 + \sqrt{ 6 \ell (W_{n : 1} (t) ) } \big ) \cdot \big ( \norm{ \e_1 \e_2^\top } + \norm{ \e_2 \e_1^\top } \big ) + \sqrt{ 6 \ell ( W_{n : 1} (t) ) } \cdot \norm{ \e_2 \e_2^\top } \\
& \leq 2 \big ( \norm{ \e_1 \e_2^\top } + \norm{ \e_2 \e_1^\top } + \norm{ \e_2 \e_2^\top } \big )
\text{\,,}
\end{split}
\]
where the second transition is by our (temporary) assumption $\ell ( W_{n : 1} (t)  ) < 1 / 6$.
Combining the latter three inequalities, we obtain:
\[
\norm{ W_{n : 1} (t) } \geq  \frac{ 1 }{ \sqrt{ 6 \ell( W_{n : 1} (t) ) } } \cdot \norm{ \e_1 \e_1^\top } - 2 \big ( \norm{ \e_1 \e_1^\top } +  \norm{ \e_1 \e_2^\top } + \norm{ \e_2 \e_1^\top } + \norm{ \e_2 \e_2^\top } \big )
\text{\,.}
\]
Defining the constants $c := \norm{ \e_1 \e_1^\top } / \sqrt{6} > 0$ and $c' := \min \big \{ - \sqrt{6} c \, , - 2 \big ( \norm{ \e_1 \e_1^\top } +  \norm{ \e_1 \e_2^\top } + \norm{ \e_2 \e_1^\top } + \norm{ \e_2 \e_2^\top } \big ) \big \}$, while recalling that $\ell^* = 0$, we have:
\be
\| W_{n : 1} ( t ) \| \geq c \big( \ell \big( W_{n : 1} ( t ) \big) - \ell^* \big)^{- 1 / 2} + c'
\text{\,.}
\label{eq:norm_bound}
\ee
This is the sought-after result, subject to our (temporary) assumption $\ell ( W_{n : 1} (t) ) < 1 / 6$.
To lift the assumption, note that whenever $\ell ( W_{n : 1} (t) ) \geq 1 / 6$, the right-hand side of Equation~\eqref{eq:norm_bound} is non-positive, hence it is trivially no greater than the left-hand side.
\end{proof}

The inability of norms to explain the implicit regularization under study (\ie, that of linear neural networks in matrix sensing) poses a quandary, as from a classical machine learning perspective, regularization is typically norm-based.
To overcome this quandary we return to Theorem~\ref{theorem:sing_dyn}, which provides an alternative explanation by revealing a form of \emph{greedy low rank learning}.
Namely, Theorem~\ref{theorem:sing_dyn} reveals that each~$\sigma_r ( \cdot )$, \ie, each singular value of~$W_{n : 1} ( \cdot )$, evolves at a rate given by a product of two factors:
\emph{(i)}~$\inprodlr{- \nabla \ell \big( W_{n : 1} ( \cdot ) \big)}{\uu_r ( \cdot ) \vv_r^\top ( \cdot )} n$, which implies that $\sigma_r ( \cdot )$ moves faster when its singular component (\ie, $\uu_r ( \cdot ) \vv_r^\top ( \cdot )$) is more aligned with the direction of steepest descent in the loss (\ie, with $- \nabla \ell ( W_{n : 1} ( \cdot ) )$);
and
\emph{(ii)}~$( \sigma_r^2 ( \cdot ) )^{1 - 1 / n}$, by which the speed of~$\sigma_r ( \cdot )$ is proportional to its size exponentiated by~$2 - 2 / n$.
The second factor creates a momentum-like effect, which attenuates the movement of small singular values and accelerates the movement of large ones.
Accordingly, we may expect that with near-zero initialization (regime of interest), singular values progress slowly at first, and then, one after the other they reach a critical threshold and quickly rise, until convergence is attained.
Such dynamics can be viewed as a greedy learning process which incrementally increases the rank of its search space, thereby entailing a preference for solutions of low rank.
This greedy low rank learning process indeed takes place empirically, and moreover, in accordance with the fact that the exponent $2 - 2 / n$ grows with~$n$ (number of layers in the linear neural network), the process sharpens with depth---see Figure~\ref{fig:lnn_sing} for an example.
We note that under certain conditions, Theorem~\ref{theorem:sing_dyn} can be used to derive closed-form expressions for \smash{$( \sigma_r ( \cdot ) )_{r = 1}^{\min \{ d_0 , d_n \}}$} (see~\citet{arora2019implicit}), or to formally prove that the global minimizer to which $W_{n : 1} ( \cdot )$ converges has low rank (analogously to the results of~\citet{li2021towards,razin2021implicit,jin2023understanding}).

\begin{figure}[t]
\centering
\includegraphics[width=\textwidth]{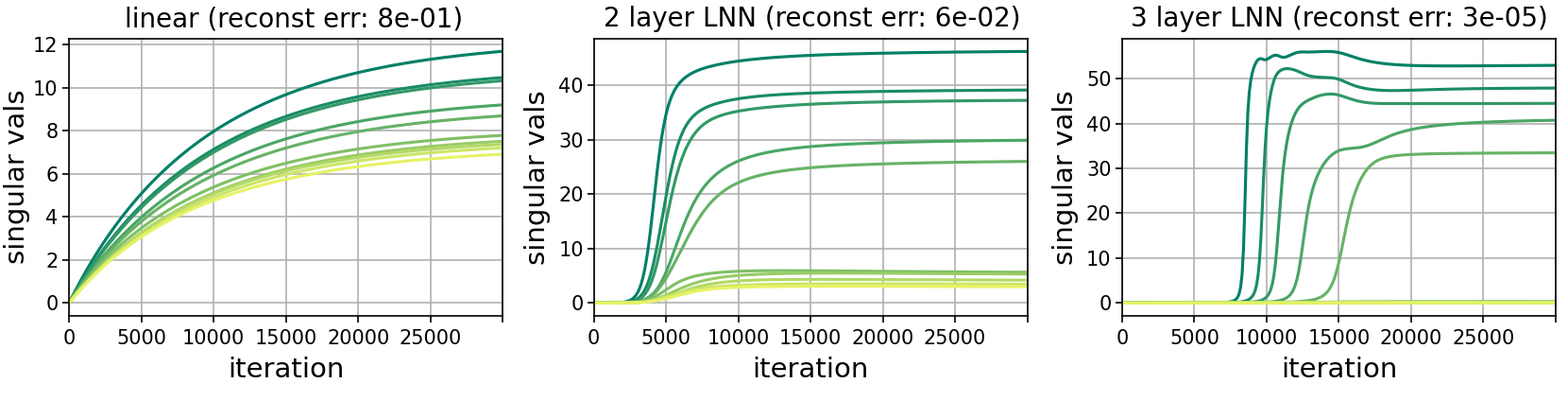}
\caption{
Empirical demonstration of the greedy low rank learning process brought forth by the implicit regularization of linear neural networks.
Plots show, for a matrix sensing task comprising measurements taken from a low rank ground truth, singular values of the learned solution throughout the iterations of gradient descent.
Left (``linear'') plot corresponds to direct minimization of the matrix sensing loss; middle (``2~layer LNN'') and right (``3~layer LNN'') plots correspond to minimization of the overparameterized objectives induced by two and three layer linear neural networks, respectively.
Plot titles specify reconstruction error (distance of learned solution from ground truth matrix) at the end of training.
Notice that the greedy low rank learning process takes place only with the linear neural networks, and is sharper with the deeper network.
For details see~\citet{arora2019implicit}, from which results are taken.
}
\label{fig:lnn_sing}
\end{figure}

\subsection{Implicit Compression by Overparameterization} \label{sec:gen:comp}

Recall from Subsection~\ref{sec:opt:acc} that overparameterization, \ie, replacement of a linear transformation with a linear neural network, can accelerate optimization.
The results of the current section imply that it also encourages convergence to solutions of low rank, meaning ones that can be compressed.\footnote{%
For $d , d' \in \N$, representing a linear transformation $\T : \R^d \to \R^{d'}$ generally requires $d d'$ parameters, but if $\rank ( \T ) = \rho$, where $\rho \ll \min \{ d , d' \}$, then far less parameters suffice, namely $\rho ( d + d' - \rho )$.
}
This phenomenon---an \emph{implicit compression by overparameterization}---facilitates a practical technique of adding linear layers for fitting training data with a reduced-size model, thereby improving both generalization and computational efficiency at inference time.
The technique has been applied to \emph{non-linear} neural networks in various real-world settings (see, \eg,~\citet{guo2020expandnets,jing2020implicit,huh2021low}).
Similarly to acceleration of optimization by overparameterization (see Subsection~\ref{sec:opt:acc}), it constitutes a practical application of linear neural networks born from a dynamical analysis.

\section{Extension: Arithmetic Neural Networks} \label{sec:ext}

Linear neural networks are a fundamental model in the theory of deep learning, and as discussed in Subsections \ref{sec:opt:acc} and~\ref{sec:gen:comp}, they also admit practical benefits.
Nevertheless, they are limited to linear (input-to-output) mappings, thus fail to capture the crucial role of non-linearity in deep learning.
In this section we briefly discuss a non-linear extension of linear neural networks that is closer to practical deep learning.

Linear neural networks can be viewed as \emph{matrix factorizations}, in accordance with the fact that the mappings they realize are naturally represented as matrices factorized by network weights.
Lifting matrices (two-dimensional arrays) to higher dimensions, \ie, considering \emph{tensor factorizations} (or more precisely, mappings represented as tensors factorized by network weights),\footnote{%
For a mathematical introduction to tensor analysis see~\citet{hackbusch2012tensor}.
}
gives rise to neural networks with multiplicative non-linearity, known as \emph{arithmetic neural networks}.
Arithmetic neural networks have been shown to exhibit promising empirical performance (see, \eg, \citet{cohen2016deep,sharir2016tensorial,chrysos2021deep}), and their expressiveness was the subject of numerous theoretical studies (see, \eg,~\citet{cohen2016expressive,cohen2016convolutional,cohen2017inductive,cohen2017analysis,cohen2018boosting,sharir2018expressive,levine2018benefits,levine2018deep,balda2018tensor,khrulkov2018expressive,khrulkov2019generalized,levine2019quantum,alexander2023makes,razin2023ability}).
It is possible to extend some of the (linear neural network) results in these lecture notes to arithmetic neural networks.
In particular, through a dynamical analysis, it can be shown that---similarly to how the implicit regularization of linear neural networks lowers matrix rank---the implicit regularization of arithmetic neural networks lowers \emph{tensor ranks}.
For details see \citet{razin2021implicit} and~\citet{razin2022implicit}.

\section{Conclusion}

These lecture notes presented a theory of linear neural networks---a fundamental model in the study of optimization and generalization in deep learning.
At the heart of the theory lies a dynamical characterization, by which training a linear neural network is equivalent to training a linear mapping with a certain preconditioner that promotes movement in directions already taken (Section~\ref{sec:dyn}).
The dynamical characterization facilitated two results concerning optimization (Section~\ref{sec:opt}):
\emph{(i)}~a guarantee of convergence to global minimum, applicable to linear neural networks of arbitrary depth (Subsection~\ref{sec:opt:cvg});
and
\emph{(ii)}~a proof that there exist cases where insertion of depth via linear layers can accelerate training, despite introducing non-convexity while yielding no gain in terms of expressiveness (Subsection~\ref{sec:opt:acc}).
With regards to generalization, the dynamical characterization was used to show that the implicit regularization of linear neural networks implements a greedy low rank learning process, which cannot be characterized as lowering any norm (Section~\ref{sec:gen}).
Practical applications born from the presented theory---namely, techniques for improving optimization, generalization and computational efficiency of \emph{non-linear} neural networks---were discussed (Subsections \ref{sec:opt:acc} and~\ref{sec:gen:comp}).

The dynamical approach underlying the presented theory was shown to overcome limitations of other theoretical approaches.
For example:
\emph{(i)}~it established a convergence guarantee (Theorem~\ref{theorem:lnn_converge}) in the presence of non-strict saddle points (Proposition~\ref{prop:oprm_obj_non_strict}), \ie, in a setting where generic landscape arguments from the literature on non-convex optimization (see, \eg,~\citet{ge2015escaping,lee2016gradient}) are invalid;
\emph{(ii)}~it proved that non-convex training of a linear neural network can be much faster than convex training of a linear mapping (Claim~\ref{claim:lnn_acc}), a conclusion that cannot be attained through the classical lens by which convex training is preferable;
and
\emph{(iii)}~it provided a description of implicit regularization (Theorem~\ref{theorem:sing_dyn}) applicable to settings where no norm is being lowered (Proposition~\ref{prop:imp_reg_no_norm}), \ie, where the standard association between regularization and norms (\cf~Supposition~\ref{supposition:imp_reg_nuc_norm}) is inappropriate.
Dynamical approaches have recently been adopted beyond the context of linear neural networks, \eg, for analyzing arithmetic neural networks (see Section~\ref{sec:ext}).
We hypothesize that such approaches will be key to developing a complete theoretical understanding of deep~learning.

\section*{Exercises}

\begin{exercise}
\label{exercise:e2e_dyn_single_out}
Consider the expression for the end-to-end dynamics given in Theorem~\ref{theorem:e2e_dyn} (Equation~\eqref{eq:e2e_dyn}).
Simplify this expression for the case of a single output variable (\ie,~$d_n = 1$), and explain how the simplified form resonates with the interpretation of the end-to-end dynamics as promoting movement in directions already~taken.
\end{exercise}

\vspace{1mm}
\begin{exercise}
\label{exercise:e2e_dyn_symm}
In this exercise you will derive a variant of the~end-to-end dynamics (Theorem~\ref{theorem:e2e_dyn}) that applies to a two layer \emph{symmetric} linear neural network, \ie, to the parametric family of hypotheses $\{ \x \mapsto W W^\top \x : W \in \R^{d \times d} \}$, where $d \in \N$.
Let $\ell_s : \R^{d \times d} \to \R$ be an analytic training loss, and consider the induced~objective:
\[
\phi_s : \R^{d \times d} \to \R
~ , ~
\phi_s ( W ) = \ell_s ( W W^\top )
\text{\,.}
\]
Suppose we run gradient flow over~$\phi_s ( \cdot )$, meaning we generate a continuous curve $W ( \cdot )$ in~$\R^{d \times d}$ via the following differential equation:
\[
\dot{W} ( t ) := \tfrac{d}{dt} W ( t ) = - \nabla \phi_s ( W ( t ) )
~ , ~
t > 0
\text{\,.}
\]
Derive a self-contained expression for the dynamics of the curve $W_s ( \cdot ) := W ( \cdot ) W ( \cdot )^\top$.
Specifically, derive an expression for $\dot{W}_s ( t ) := \tfrac{d}{dt} W_s ( t )$, where $t > 0$, that depends on~$W ( t )$ only via~$W_s ( t )$ (meaning the expression may include $W_s ( t )$ but not~$W ( t )$).
\end{exercise}

\vspace{1mm}
\begin{exercise}
\label{exercise:mat_dist}
Prove the following result, used in Example~\ref{example:def_margin}.
For any $d , d' \in \N$, $W \in \R^{d \times d'}$ and $\delta > 0$, it holds that:
\[
\min \big\{ \| W - W' \|_F : W' \in \R^{d \times d'} , \sigma_{\min} ( W' ) \leq \delta \big\} = \max \big\{ 0 , \sigma_{\min} ( W ) - \delta \big\}
\text{\,,}
\]
where $\| \cdot \|_F$ stands for Frobenius norm and $\sigma_{\min} ( \cdot )$ refers to the minimal singular value of a matrix.
You may use without proof the fact that $\| W - W' \|_F \geq \| \sigmabf ( W ) - \sigmabf ( W' ) \|_2$, where $\sigmabf ( W )$ and~$\sigmabf ( W' )$ are the vectors in~$\R^{\min \{ d , d' \}}$ holding, in non-increasing order, the singular values of $W$ and~$W'$, respectively (see Exercise~IV.3.5 in~\citet{bhatia1997matrix}).
\end{exercise}

\vspace{1mm}
\begin{exercise}
\label{exercise:imp_reg_no_norm_neg_det}
Suppose we modify the statement of \mbox{Proposition~\ref{prop:imp_reg_no_norm}} by replacing the condition $\det \big( W_{n : 1} ( 0 ) \big) > 0$ with $\det \big( W_{n : 1} ( 0 ) \big) < 0$.
Adapt the proof of the proposition such that it accords with the modified statement.
\end{exercise}

\vspace{1mm}
\begin{exercise}
Consider the singular value dynamics from Theorem~\ref{theorem:sing_dyn}, in the special case where the linear neural network has two layers, \ie, $n = 2$.
Let \smash{$( c_r )_{r = 1}^{\min \{ d_0 , d_2 \}}$} be a tuple of distinct real numbers.
Let $\epsilon > 0$ and $t_e > 0$.
Assume that, for $r = 1 , 2 , \ldots , \min \{ d_0 , d_2 \}$:
\begin{itemize}[topsep=0mm]
\item $\sigma_r ( \cdot )$~equals~$\epsilon$ at initialization (\ie, $\sigma_r ( 0 ) = \epsilon$);
and
\item $\langle - \nabla \ell ( W_{2 : 1} ( \cdot ) ) , \uu_r ( \cdot ) \vv_r^\top ( \cdot ) \rangle$ is fixed at~$c_r$ until time~$t_e$ of optimization (\ie, for all $t \in [ 0 , t_e ]$ it holds that $\langle - \nabla \ell ( W_{n : 1} ( t ) ) , \uu_r ( t ) \vv_r^\top ( t ) \rangle = c_r$).
\end{itemize}
Derive closed-form expressions for $( \sigma_r ( t ) )_{r = 1}^{\min \{ d_0 , d_2 \}}$, where $t \in [ 0 , t_e ]$.
Show that gaps between singular values grow rapidly, in the sense that, for any $r , r' \in \big\{ 1 , 2 , \ldots , \min \{ d_0 , d_2 \} \big\}$, $r \neq r'$, the ratio between $\sigma_r ( t )$ and~$\sigma_{r'} ( t )$ evolves exponentially with~$t$ when $t \in [ 0 , t_e ]$.
\end{exercise}

	\ifdefined\NEURIPS
		\begin{ack}
			We thank Yotam Alexander, Nimrod De La Vega, Itamar Menuhin-Gruman, and Tom Verbin for aid in writing some of the proofs.
The theory presented in these notes was developed with the support of NSF, ONR, Simons Foundation, Schmidt Foundation, Mozilla Research, Amazon Research, DARPA, SRC, Len Blavatnik and the Blavatnik Family Foundation, Yandex Initiative in Machine Learning, a Google Research Scholar Award, a Google Research Gift, Israel Science Foundation, Tel Aviv University Center for AI and Data Science, and Amnon and Anat Shashua.
NR is supported by the Apple Scholars in AI/ML PhD fellowship.
		\end{ack}
	\else
		\newcommand{We thank Yotam Alexander, Nimrod De La Vega, Itamar Menuhin-Gruman, and Tom Verbin for aid in writing some of the proofs.
The theory presented in these notes was developed with the support of NSF, ONR, Simons Foundation, Schmidt Foundation, Mozilla Research, Amazon Research, DARPA, SRC, Len Blavatnik and the Blavatnik Family Foundation, Yandex Initiative in Machine Learning, a Google Research Scholar Award, a Google Research Gift, Israel Science Foundation, Tel Aviv University Center for AI and Data Science, and Amnon and Anat Shashua.
NR is supported by the Apple Scholars in AI/ML PhD fellowship.}{}
	\fi
	\ifdefined\ARXIV
		\section*{Acknowledgements}
		
	\else
		\ifdefined\COLT
			\acks{}
		\else
			\ifdefined\CAMREADY
				\ifdefined\ICLR
					\newcommand*{\subsuback}{}
				\fi
				\ifdefined\NEURIPS
				\else
					\section*{Acknowledgements}
					
				\fi
			\fi
		\fi
	\fi

	\section*{References}
	{\small
		\ifdefined\ICML
			\bibliographystyle{icml2021}
		\else
			\bibliographystyle{plainnat}
		\fi
		\bibliography{refs}
	}

	\clearpage
	\appendix
	
	
	\ifdefined\ENABLEENDNOTES
		\theendnotes
	\fi
	


\end{document}